\documentclass{article}
\usepackage[T1]{fontenc}
\usepackage{iclr2027_conference,times}

\usepackage{amsmath,amsfonts,bm}

\def\eqref#1{equation~\ref{#1}}

\def\1{\bm{1}}

\DeclareMathAlphabet{\mathsfit}{\encodingdefault}{\sfdefault}{m}{sl}
\SetMathAlphabet{\mathsfit}{bold}{\encodingdefault}{\sfdefault}{bx}{n}

\usepackage{graphicx,booktabs,amssymb,amsthm,xcolor,array}
\usepackage{hyperref,url}
\usepackage{etoc}
\usepackage{placeins}
\AtBeginDocument{\etocdepthtag.toc{mtchapter}}
\newcolumntype{P}[1]{>{\raggedright\arraybackslash}p{#1}}
\newtheorem{proposition}{Proposition}
\newtheorem{lemma}{Lemma}

\newif\ifshowrwcomments
\showrwcommentsfalse

\title{What Must a World Model Distinguish for Planning?}
\author{%
Rongzhe Wei\textsuperscript{1}\quad
Hans Hao-Hsun Hsu\textsuperscript{1}\quad
Peizhi Niu\textsuperscript{2}\quad
Yifan Li\textsuperscript{3}\quad
Pan Li\textsuperscript{1}\\[3pt]
{\normalfont\small \textsuperscript{1}Georgia Institute of Technology}\\
{\normalfont\small \textsuperscript{2}University of Illinois Urbana-Champaign\quad
\textsuperscript{3}Tsinghua University}\\[3pt]
{\normalfont\small\texttt{\{rongzhe.wei,hans.hsu,panli\}@gatech.edu}}\\
{\normalfont\small\texttt{peizhin2@illinois.edu}\quad
\texttt{lyf21@mails.tsinghua.edu.cn}}}
\iclrfinalcopy
\hypersetup{
  pdftitle={What Must a World Model Distinguish for Planning?},
  pdfauthor={Rongzhe Wei, Hans Hao-Hsun Hsu, Peizhi Niu, Yifan Li, Pan Li}
}

\begin{document}
\maketitle
\fancyhead{}
\renewcommand{\headrulewidth}{0pt}
\begin{abstract}
World models simulate the consequences of action candidates, but good planning need not preserve every physical distinction required for accurate prediction. We formalize this gap through a hierarchy of mechanism, response, and decision sufficiency. Given a candidate set, the planning query determines which physical variations matter and how precisely they must be preserved: coarse decisions can discard much of the information needed for prediction, whereas fine decisions may require nearly the same resolution. In practice, planners often adaptively search to construct candidates, and information unnecessary for final selection may still be needed to discover good candidates. What a world model must preserve therefore depends on the query, the candidate set, and the planner. We study these effects in a collision system, nonlinear dynamics, and robotic planning. These varying requirements raise a design question: where should query information enter the planning system? A model that jointly generates actions and outcomes conditioned on the query achieves lower regret than an action-conditioned world model on seen objectives, but this advantage largely disappears when generalizing to unseen objectives. Motivated by this, we propose a modular design in which the query determines where to look and an action-conditioned model predicts what will happen, allowing the same predictions to be reused across objectives.

\end{abstract}
\vspace{-4mm}
\section{Introduction}
\label{sec:intro}
\vspace{-2mm}
World models support planning by predicting the consequences of candidate actions~\citep{ha2018world,hafner2019learning}. Recent systems extend this approach to continuous control and visual robotic planning~\citep{hansen2024td,zhou2024dino,assran2025v}. Yet not every predictable distinction between physical worlds matters for action choice~\citep{grimm2020value}. For instance, two worlds may produce different trajectories yet favor the same action, while a small change in another physical property can reverse the preferred action. This raises a question beyond predicting what will happen: \emph{which physical distinctions must a world model preserve for planning, and at what resolution?}

A planning query specifies how candidate outcomes are evaluated. Given a candidate set, we distinguish three progressively weaker notions of sufficiency: preserving distinctions in physical dynamics (\emph{mechanism}), preserving the costs of candidate actions under the query (\emph{response}), and preserving enough information to choose a low-regret action (\emph{decision}). Moving from mechanism to response asks \emph{which} physical variations affect candidate costs, while moving from response to decision asks \emph{how precisely} those variations must be resolved. The resulting response--decision gap is \emph{query dependent}: even for the same physical quantity, coarse decisions can discard substantial response detail, whereas fine decisions may require nearly the same resolution as predicting candidate costs.

However, the decision itself is not sufficient to support the planner. In practice, candidate actions are often constructed during planning rather than fixed in advance~\citep{zhou2024dino,assran2025v}. Iterative planners use model predictions to determine which parts of the action space to explore and ultimately which action to select. We find that a representation sufficient for final selection can discard distinctions needed to discover useful candidates. Richer information is especially valuable early in search. What a world model must preserve therefore depends not only on the current candidate set, but also on how the planner uses its predictions, for example to update the search distribution or to make the final choice. The query can further guide which actions are considered. In our controlled experiment, giving the proposal network access to the query achieves higher success with \(16\times\) fewer candidate evaluations than generating candidates without it. These constraints also shape how the representation should be learned. When capacity is insufficient to retain all predictive detail, supervision tied more directly to the planner's decisions better preserves information useful for planning, although this benefit shrinks as capacity increases. Together, these findings raise a broader design question: \emph{where should query information enter the planning system?}

We evaluate two paradigms with distinct query placements. In the action-conditioned world model (ACWM) paradigm~\citep{zhang2026world}, outcome prediction is unconditioned on the query, allowing the same prediction to be evaluated under different objectives. In contrast, the world-action model (WAM) paradigm~\citep{ye2026world,team2026motubrain} conditions both action generation and outcome prediction on the query. This distinction becomes important when objectives change: a new objective changes how outcomes are evaluated, but not the physical consequence of a given action. Empirically, WAM achieves lower regret on seen objectives, yet this advantage largely disappears on unseen objective combinations. These findings motivate a modular design that separates search from physical prediction: the query guides candidate action proposals, while an action-conditioned model predicts their outcomes. This design achieves lower held-out regret than ACWM and lower seen-objective regret than WAM, allowing search to adapt across objectives without coupling physical prediction to specific objectives.

\vspace{-2mm}
\section{Related Work}
\label{sec:related_work}
\vspace{-2mm}

\textbf{World models for planning.}
Learned world models support control through latent dynamics and imagined rollouts~\citep{ha2018world,watter2015embed,hafner2019learning,hafner2019dream,hafner2025mastering}, with applications to continuous control and visual planning~\citep{hansen2022temporal,hansen2024td,zhou2024dino,assran2025v,terver2025drives}. Within these systems, the planning objective can guide search over action-conditioned predictions~\citep{zhang2026world} or condition the joint generation of actions and outcomes~\citep{ye2026world,team2026motubrain}. SAGE separates these roles by using goal-conditioned subgoals to guide candidate generation while retaining a frozen world model for evaluation~\citep{cheng2026sage}. Our work studies how the information needed to find useful candidates differs from that needed to choose among them, and how query placement affects generalization to unseen objective compositions.

\textbf{Task-relevant abstractions for planning.}
State abstraction, bisimulation, and value-aware modeling characterize which distinctions can be discarded while preserving specified reward, value, or policy properties~\citep{li2006towards,ferns2012metrics,farahmand2017value,grimm2020value}. Related approaches learn representations and planning models without reconstructing observations~\citep{silver2017predictron,oh2017value,schrittwieser2020mastering,zhang2020learning}, while rate-distortion formulations relate model compression to bounded suboptimality~\citep{arumugam2022deciding}. More recently, query-conditioned physical modeling proposes adapting the physical abstraction and prediction fidelity to the intervention query~\citep{thorpe2026physically}. Building on this perspective, we distinguish the information required to predict candidate costs from the information required to make a low-regret decision. This separation reveals two distinct query roles. It determines which physical variations are relevant to planning and how precisely those variations must be resolved.

\textbf{Decision-aware representation learning.}
Information bottlenecks study compression that retains information relevant to downstream tasks~\citep{alemi2016deep,islam2022representation}, while goal-aware prediction directs dynamics modeling toward quantities relevant to a specified goal~\citep{nair2020goal}. Objective-mismatch analysis and decision-focused learning further study how training objectives align with downstream decision quality~\citep{lambert2020objective,mandi2024decision}. Our work decouples the representation needed for planning from the training objective used to induce it, examining when decision-focused supervision outperforms learning from candidate cost predictions.

\vspace{-2mm}
\section{Physical Distinguishability for World-Model Planning}
\label{sec:formulation}

\vspace{-2mm}
\subsection{World Models and Planning Queries}
\vspace{-1mm}
\label{sec:setup}

We consider physical worlds indexed by a mechanism $\theta\in\Theta$. Given a history of observations and actions $h_t=(o_{\leq t},a_{<t})$, a world model with parameters $\psi$ forms a predictive state $z_t=\mathcal{W}_\psi(h_t)$ and predicts an outcome $\widehat{Y}_\psi(A\mid z_t)$ for each candidate action sequence $A$. We write $Y(A;\theta)$ for the corresponding physical outcome, such as a trajectory, terminal pose, or contact outcome. For each candidate action, we compare physical worlds under the same initial conditions. 

A planning query $q$ specifies an evaluation function $\ell_q(Y,A)$ that assigns a scalar cost to outcome $Y$ and action $A$. The true candidate cost is $J_q(A;\theta)=\ell_q(Y(A;\theta),A)$. The model cost is $\widehat{J}_q(A\mid z_t)=\ell_q(\widehat{Y}_\psi(A\mid z_t),A)$. Assuming the minimum over the candidate set $C$ is attained, let $A_q^\star(\theta;C)$ denote the minimizer under a fixed tie-breaking rule: $A_q^\star(\theta;C)\in\arg\min_{A\in C}J_q(A;\theta)$.
We ask which distinctions between physical worlds $\theta,\theta'$ a predictive state must preserve for this choice.

\vspace{-2mm}
\subsection{A Hierarchy of World-Model Sufficiency}
\vspace{-1mm}
\label{sec:hierarchy}

For fixed $q$ and $C$, we introduce three progressively weaker sufficiency requirements through equivalence relations over physical worlds.

\textbf{Mechanism sufficiency.} We write $\theta\sim_M\theta'$ when two worlds induce the same physical dynamics. A mechanism-sufficient world model distinguishes worlds whenever their dynamics differ, regardless of their relevance to the query.

\textbf{Response sufficiency.} This requirement preserves the query-specific candidate-cost profile $\mathbf{J}_q^C(\theta)=(J_q(A;\theta))_{A\in C}$. We define
$\theta\sim_R\theta'$ if and only if $\mathbf{J}_q^C(\theta)=\mathbf{J}_q^C(\theta')$.
A response-sufficient world model may therefore merge physically different worlds that induce the same candidate-cost profile.

\textbf{Decision sufficiency.} Distinct candidate-cost profiles may still yield the same selected action. We define $\theta\sim_D\theta'$ if and only if $A_q^\star(\theta;C)=A_q^\star(\theta';C)$.

The equivalence relations above induce partitions
$\Pi_M$, $\Pi_R^{C,q}$, and $\Pi_D^{C,q}$ of $\Theta$, whose cells are the corresponding equivalence classes. These partitions satisfy $\Pi_M\preceq\Pi_R^{C,q}\preceq\Pi_D^{C,q}$, where $\Pi\preceq\Pi'$ means that every cell of $\Pi$ lies within a cell of $\Pi'$. A world model is $\Pi$-sufficient if any two worlds mapped to the same predictive state belong to the same $\Pi$-cell. These are information requirements, not guarantees of attainability: if the history cannot distinguish worlds in different cells, neither can a representation derived from it. Sec.~\ref{sec:query_structure} introduces tolerance-aware covers to extend the analysis to approximate response prediction and low-regret decisions.

\vspace{-2mm}
\section{Queries Determine What a World Model Must Preserve}
\label{sec:query}
\vspace{-2mm}

In this paper, we study how planning queries change the information a world model must preserve, first for a fixed candidate set and then when candidates are constructed by the planner. Fig.~\ref{fig:experimental_overview} summarizes the experimental settings used throughout Secs.~\ref{sec:query}--\ref{sec:query_design}, detailed architectures and training protocols are deferred to the appendix.

\vspace{-2mm}
\subsection{Query-Dependent Relevance and Resolution}
\vspace{-1mm}
\label{sec:query_structure}
Sec.~\ref{sec:hierarchy} separates two ways the query shapes what a world model must preserve. Moving from mechanism to response determines \emph{which} physical variations affect candidate costs, while response-to-decision determines \emph{how precisely} they must be resolved for low-regret planning.

For fixed $q$ and $C$, physically different worlds need not remain distinguishable if they induce the same candidate-cost profile $\mathbf{J}_q^C(\theta)$. Because queries evaluate physical outcomes differently, different queries can make different variations of the same physical system relevant. Even when two queries depend on the same physical variation, they may require different precision. 
To describe approximate decisions, we define the candidate regret
\begin{equation}
    \operatorname{Reg}_q(A;\theta,C)=J_q(A;\theta)-\min_{A'\in C} J_q(A';\theta).
\end{equation}
Worlds with different optimal actions can still admit the same low-regret action. Since this relation need not be transitive, approximate decision requirements are more naturally described by covers than by the exact partitions in Sec.~\ref{sec:hierarchy}. Let $\epsilon$ bound the maximum absolute error across candidate costs, $\|\widehat{\mathbf{J}}-\mathbf{J}\|_\infty\leq\epsilon$, and let $\Delta$ bound decision regret. The response cover size $N_R^{C,q}(\epsilon)$ is the minimum number of radius-$\epsilon$ sets needed to cover the candidate-cost profiles $\{\mathbf{J}_q^C(\theta):\theta\in\Theta\}$. The decision cover size $N_D^{C,q}(\Delta)$ is the minimum number of sets covering $\Theta$ such that each set admits a common action $A\in C$ with $\operatorname{Reg}_q(A;\theta,C)\leq\Delta$ for every world it contains. Uniformly accurate candidate costs connect these two requirements.

\begin{lemma}[Response-to-decision cover bound]
\label{lem:response_decision}
If $\sup_{A\in C}|\widehat{J}_q(A)-J_q(A;\theta)|\leq\epsilon$ and $\widehat{A}\in\arg\min_{A\in C}\widehat{J}_q(A)$, then
\begin{equation}
\operatorname{Reg}_q(\widehat{A};\theta,C)\leq 2\epsilon,
\qquad
N_D^{C,q}(2\epsilon)\leq N_R^{C,q}(\epsilon).
\end{equation}
\end{lemma}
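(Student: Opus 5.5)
The plan is to prove the two claims in order. The cover inequality will come from applying the regret bound once per cell, using the center of each response cell as a surrogate cost vector.

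\textbf{Step 1 (pointwise regret bound).} Fix $\theta$ and write $A^\star=A_q^\star(\theta;C)$, which exists by the standing assumption that minima over $C$ are attained. I will chain three inequalities:
\[
J_q(\widehat{A};\theta)\le \widehat{J}_q(\widehat{A})+\epsilon\le \widehat{J}_q(A^\star)+\epsilon\le J_q(A^\star;\theta)+2\epsilon .
\]
The first and third use the uniform error bound $\sup_{A\in C}|\widehat{J}_q(A)-J_q(A;\theta)|\le\epsilon$. The middle one uses that $\widehat{A}$ minimizes $\widehat{J}_q$ over $C$. Subtracting $\min_{A'\in C}J_q(A';\theta)=J_q(A^\star;\theta)$ gives $\operatorname{Reg}_q(\widehat{A};\theta,C)\le 2\epsilon$.

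A key point is that this argument never uses that $\widehat{J}_q$ comes from a world model. It holds for any vector $\mathbf{c}\in\mathbb{R}^C$ with $\|\mathbf{c}-\mathbf{J}_q^C(\theta)\|_\infty\le\epsilon$, taking $\widehat{A}\in\arg\min_{A}c_A$. The minimum exists because $C$ is indexed by coordinates of a vector, so finiteness is implicit in $\mathbf{J}_q^C$ being a profile. If $C$ were infinite, I would instead assume that the argmin is attained, as the statement already does for $\widehat{A}$.

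\textbf{Step 2 (transferring the cover).} Take a minimal family $S_1,\dots,S_N$ with $N=N_R^{C,q}(\epsilon)$, consisting of radius-$\epsilon$ sets covering $\{\mathbf{J}_q^C(\theta):\theta\in\Theta\}$. I read a radius-$\epsilon$ set as one contained in the $\ell_\infty$ ball $B_\infty(\mathbf{c}_i,\epsilon)$ for some center $\mathbf{c}_i$. Define the pullback cells $T_i=\{\theta\in\Theta:\mathbf{J}_q^C(\theta)\in S_i\}$; these cover $\Theta$. For each cell, set $A_i\in\arg\min_{A\in C}(\mathbf{c}_i)_A$. Every $\theta\in T_i$ satisfies $\|\mathbf{c}_i-\mathbf{J}_q^C(\theta)\|_\infty\le\epsilon$, so Step 1 with $\widehat{J}_q=\mathbf{c}_i$ gives $\operatorname{Reg}_q(A_i;\theta,C)\le2\epsilon$. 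Thus $A_i$ is a common $2\epsilon$-regret action for all of $T_i$, and $\{T_i\}$ is an admissible decision cover. Minimality of $N_D^{C,q}(2\epsilon)$ then gives $N_D^{C,q}(2\epsilon)\le N$.

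\textbf{Main obstacle.} The only delicate point is the meaning of ``radius-$\epsilon$ set.''
\begin{itemize}
\item If the center $\mathbf{c}_i$ must itself be a realized profile $\mathbf{J}_q^C(\theta_i)$, the argument is unchanged, and $A_i$ can be taken to be $A_q^\star(\theta_i;C)$.
\item If instead the definition means sets of diameter at most $\epsilon$, I pick any member profile of $S_i$ as the center. Every other member is then within $\epsilon$ of it in $\ell_\infty$, and Step 2 goes through verbatim.
\end{itemize}
So the bound holds under each natural reading. I would state explicitly which reading is adopted, and the proof needs nothing beyond the regret-bound computation above.
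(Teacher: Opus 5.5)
Your proposal is correct and follows the paper's proof essentially verbatim: the same three-term chain $J(\widehat A)-J(A^\star)\le\epsilon+0+\epsilon$, then, for each radius-$\epsilon$ response-cover element, a minimizer of its center (the paper's ``common decoded profile'') serves as the shared $2\epsilon$-regret action. One small correction is that finiteness of $C$ is not implicit in $\mathbf{J}_q^C$ being a profile, since the paper applies the lemma to the continuous interval $C=[v_{\mathrm{lo}},v_{\mathrm{hi}}]$ for $q_3$; however, your fallback of assuming the center's minimum is attained is exactly the caveat the paper itself states (automatic for finite $C$, and holding for continuous profiles on a compact interval).
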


The lemma gives a matched tolerance $\Delta=2\epsilon$: information sufficient to predict every candidate cost within $\epsilon$ is also sufficient to select an action with regret at most $2\epsilon$. The converse need not hold, as different candidate-cost profiles may still admit the same low-regret action. We therefore define the response--decision log-cover gap, $\log_2 N_R^{C,q}(\epsilon)-\log_2 N_D^{C,q}(2\epsilon)$, measured in bits. It quantifies how much less distinction may be required for decision than for response prediction. Sec.~\ref{sec:collision} shows that the query determines whether this gap grows or remains bounded as the required precision increases.

\begin{figure*}[t]
    \centering
    \vspace{-2mm}
    \includegraphics[width=\textwidth]{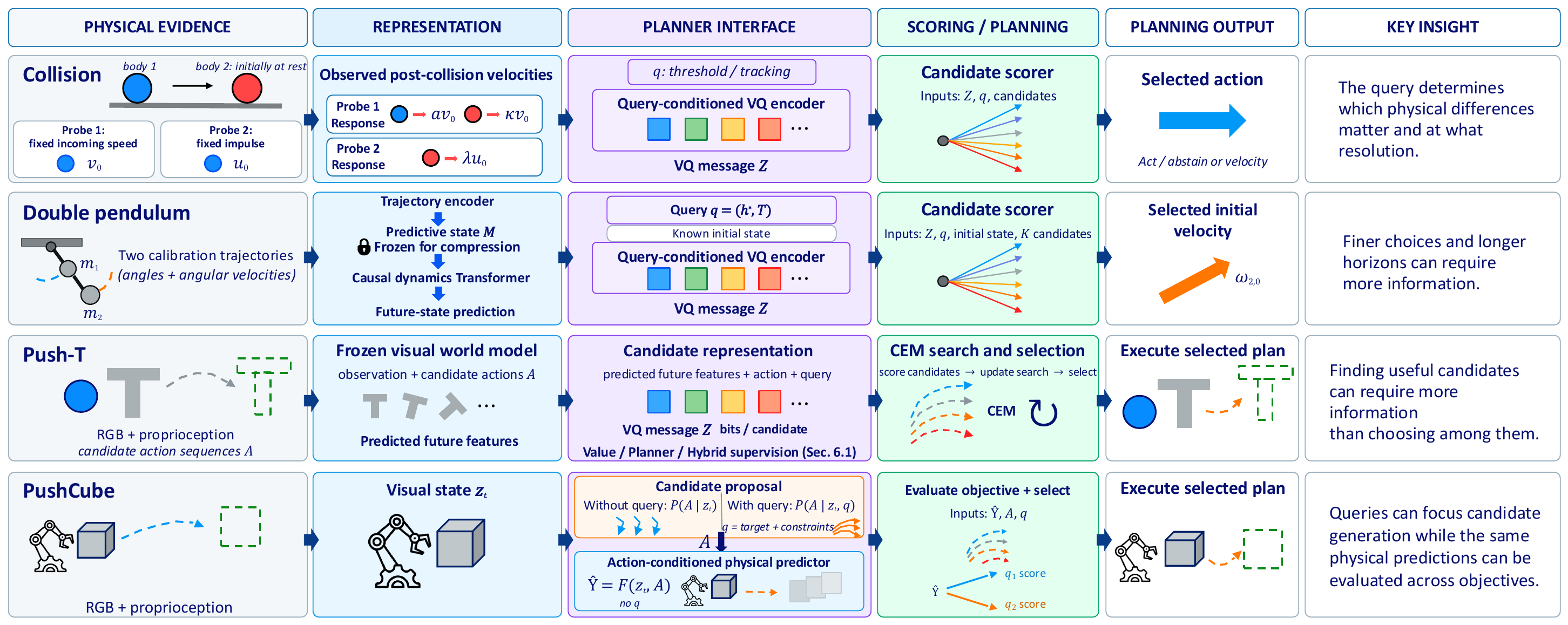}
    \vspace{-8mm}
    \caption{Overview of experimental settings.
    }
    \label{fig:experimental_overview}
    \vspace{-4mm}
\end{figure*}

\vspace{-1mm}
\subsection{Controlled Physics: Exact Query-Dependent Structure}
\vspace{-1mm}
\label{sec:collision}

We first use a one-dimensional collision system to separate the two effects in Sec.~\ref{sec:query_structure}.

\textit{\textbf{Case Study: Collision System.}}
As summarized in Fig.~\ref{fig:experimental_overview}, the mechanism is $\theta=(m_1,m_2,e)$, with body 2 initially at rest. Its post-collision velocity is $\kappa v$ for incoming velocity $v$, or $\lambda u$ for an impulse $u$, where $\kappa=(1+e)m_1/(m_1+m_2)$ and $\lambda=(1+e)/(m_1+m_2)$. We consider three queries: $q_1$ decides between abstaining (cost $0$) and a collision at fixed speed $v_0$ with cost $c_1-\kappa v_0$; $q_2$ makes the analogous binary choice for a fixed impulse $u_0$ with cost $c_2-\lambda u_0$; and $q_3$ selects $v$ to minimize velocity tracking error $|\kappa v-v_{\mathrm{tar}}|$. Full derivations and parameter ranges are given in App.~\ref{app:collision-covers}.

Queries $q_1$ and $q_2$ depend on different physical quantities, $\kappa$ and $\lambda$, neither of which determines the other over the mechanism family. In contrast, $q_1$ and $q_3$ both depend on $\kappa$ but require different precision: $q_1$ only needs to determine which side of the threshold $\kappa=c_1/v_0$ the world lies, whereas the optimal action for $q_3$ varies continuously with $\kappa$.

The query determines both which physical variation matters and how finely it must be resolved. For continuous tracking, let compact interval $C \subset \mathbb{R}_{+}$ contain every optimum $v_{\mathrm{tar}}/\kappa$.

\begin{proposition}[Collision relevance and resolution separation]
\label{prop:collision}
For threshold queries $q_1$ and $q_2$ strictly inside the ranges of $\kappa$ and $\lambda$, respectively,
$N_D^{C,q_i}(0)=2$ while $N_R^{C,q_i}(\epsilon)=\Theta(1/\epsilon)$ as $\epsilon\to0$ for $i\in\{1,2\}$.
For continuous-action $q_3$, under the endpoint condition in App.~\ref{app:collision-covers},
    \begin{equation}
        N_R^{C,q_3}(\epsilon)=\Theta(1/\epsilon),
        \qquad
        N_D^{C,q_3}(\Delta)=\Theta(1/\Delta).
    \end{equation}
\end{proposition}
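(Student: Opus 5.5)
The plan is to reduce every cover count to a one-dimensional computation in the relevant physical scalar: $\kappa$ for $q_1,q_3$, and $\lambda$ for $q_2$. By the parameter ranges in App.~\ref{app:collision-covers}, each scalar sweeps a nondegenerate compact interval, $\kappa\in[\kappa_-,\kappa_+]$ and $\lambda\in[\lambda_-,\lambda_+]$, as $\theta$ ranges over $\Theta$. This holds because both maps are continuous on a connected compact parameter box and nonconstant. Every query cost depends on $\theta$ only through that scalar, so both $N_R$ and $N_D$ can be computed on the image interval.

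\textbf{Threshold queries $q_1$ and $q_2$.} Take $q_1$ with $C=\{\text{abstain},\text{collide}\}$. The cost profile is $(0,\,c_1-\kappa v_0)$, and its image is a segment of length $v_0(\kappa_+-\kappa_-)$ in $\mathbb{R}^2$ under the sup-norm.
\begin{itemize}
\item \emph{Upper bound.} Intervals of length $2\epsilon$ cover this segment with $\lceil v_0(\kappa_+-\kappa_-)/(2\epsilon)\rceil$ sets.
\item \emph{Lower bound.} Any radius-$\epsilon$ set meets the segment in a piece of second-coordinate length at most $2\epsilon$. This gives $N_R\ge v_0(\kappa_+-\kappa_-)/(2\epsilon)$, hence $N_R=\Theta(1/\epsilon)$.
\item \emph{Decision cover.} At $\Delta=0$ a covering set needs a common exactly optimal action. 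The sets $\{\kappa\le c_1/v_0\}$ (collide) and $\{\kappa\ge c_1/v_0\}$ (abstain) suffice, so $N_D\le 2$.
\item \emph{$N_D\ge 2$.} Because the threshold lies strictly inside $(\kappa_-,\kappa_+)$, there are worlds with $\kappa_+$, where collide has strictly lower cost, and worlds with $\kappa_-$, where abstain has strictly lower cost. No single action is optimal for both, so $N_D=2$.
\end{itemize}
The argument for $q_2$ is identical with $\lambda,u_0,c_2$ in place of $\kappa,v_0,c_1$.

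\textbf{Tracking query $q_3$.} Here $C=[v_-,v_+]$ is compact and the cost is $J(v;\kappa)=|\kappa v-v_{\mathrm{tar}}|$.
\begin{itemize}
\item \emph{Response, upper bound.} The profile $\kappa\mapsto J(\cdot;\kappa)$ is Lipschitz in sup-norm with constant $\sup_C v=v_+$. A grid in $\kappa$ of mesh $2\epsilon/v_+$ therefore gives $N_R=O(1/\epsilon)$.
\item \emph{Response, lower bound.} Evaluate at the single action $v_+$. The map $\kappa\mapsto|\kappa v_+-v_{\mathrm{tar}}|$ has slope $\pm v_+$. Since every optimum $v_{\mathrm{tar}}/\kappa\le v_+$, we have $\kappa v_+\ge v_{\mathrm{tar}}$, so this map is injective with slope exactly $v_+$. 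The image therefore has length $v_+(\kappa_+-\kappa_-)$ in one coordinate, and any radius-$\epsilon$ set covers only length $2\epsilon$ of it. Hence $N_R=\Omega(1/\epsilon)$.
\item \emph{Decision, upper bound.} Fix an action $v$. The set of worlds where $v$ has regret at most $\Delta$ is $\{\kappa: |\kappa v-v_{\mathrm{tar}}|\le\Delta\}$, because the minimum over $C$ is $0$. This is a $\kappa$-interval of length $2\Delta/v$. Choosing actions $v=v_{\mathrm{tar}}/\kappa_j$ on a grid of $\kappa_j$ with spacing about $2\Delta/v_+$ covers $[\kappa_-,\kappa_+]$, so $N_D=O(1/\Delta)$.
\item \emph{Decision, lower bound.} Each covering set lies inside some such interval of length at most $2\Delta/v_-$. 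Covering $[\kappa_-,\kappa_+]$ then needs at least $v_-(\kappa_+-\kappa_-)/(2\Delta)$ sets.
\end{itemize}

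\textbf{Main obstacle.} The delicate point is the endpoint condition, which is presumably $v_->0$ together with $C\supseteq[v_{\mathrm{tar}}/\kappa_+,\,v_{\mathrm{tar}}/\kappa_-]$. It is needed in two places:
\begin{itemize}
\item It makes the per-action good set have length bounded above by $2\Delta/v_-$. If $v_-=0$ were allowed, an action near $0$ could serve a large $\kappa$-range cheaply, and the $\Omega(1/\Delta)$ bound would fail.
\item It guarantees that the minimal cost is exactly $0$ for every world, so the regret is simply $|\kappa v-v_{\mathrm{tar}}|$.
\end{itemize}
I would state this condition explicitly and verify that the constants in both bounds depend only on $v_\pm,\kappa_\pm$.
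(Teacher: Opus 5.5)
Your proposal is correct and follows essentially the same argument as the paper (App.~\ref{app:collision-covers}): the reduction to the scalar $\kappa$ or $\lambda$, the $\lceil v_0(\kappa_+-\kappa_-)/(2\epsilon)\rceil$ segment count and the two-sided threshold argument for $q_1,q_2$, the Lipschitz bound with equality at the upper endpoint for $N_R^{C,q_3}$, and the good-set intervals of length at most $2\Delta/v_-$ together with midpoint actions $v_{\mathrm{tar}}/\kappa_j$ for $N_D^{C,q_3}$. One slip: in the threshold decision cover you swapped the labels. Collide (cost $c_1-\kappa v_0$) is optimal on $\{\kappa\ge c_1/v_0\}$ and abstain on $\{\kappa\le c_1/v_0\}$, which is the assignment your next bullet correctly uses.
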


At the matched tolerance $\Delta=2\epsilon$, the response--decision log-cover gap grows without bound for the threshold queries but remains bounded for tracking (Fig.~\ref{fig:collision_structural_operational}(a)). Thus, even when two queries depend on the same physical quantity, one may require only a coarse decision boundary while the other requires increasingly precise information.

\begin{figure*}[t]
\centering
\vspace{-2mm}
\setlength{\parskip}{0pt}
\includegraphics[width=0.8\linewidth]{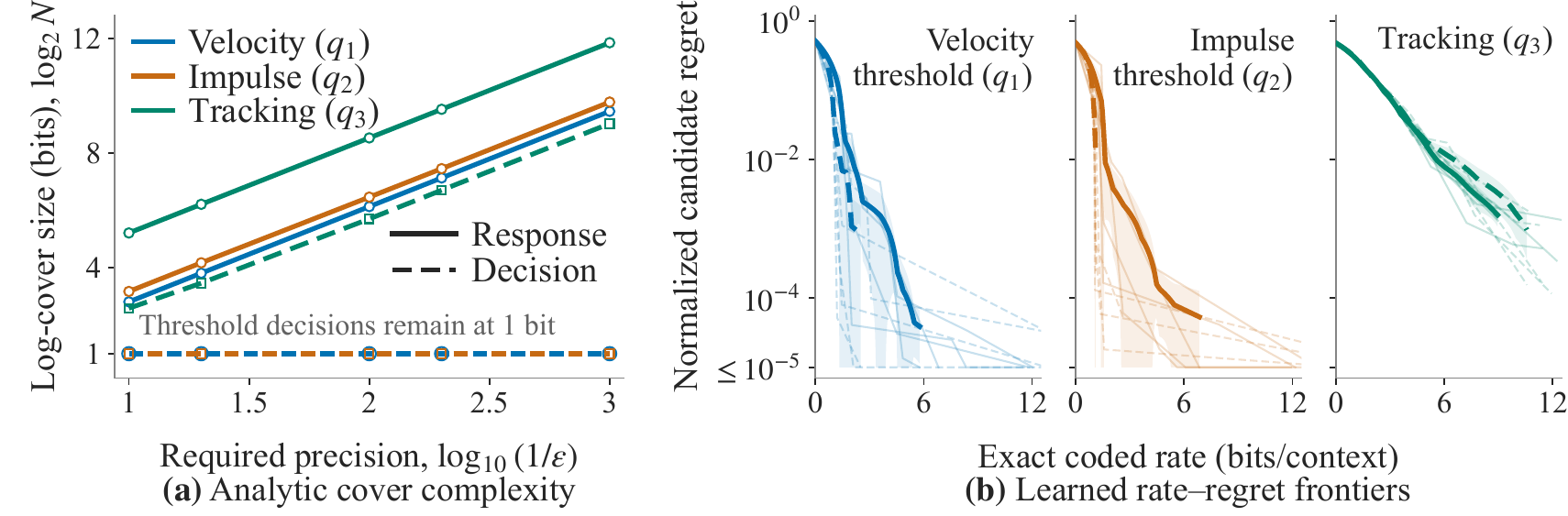}\par
\vspace{-2mm}
\small
\caption{\textbf{Query-dependent response--decision gaps.}
\textbf{(a)} Analytic cover complexity at matched tolerances $\Delta=2\epsilon$: the response--decision gap grows with precision for the threshold queries ($q_1, q_2$) but remains bounded for tracking ($q_3$).
\textbf{(b)} Learned rate--regret frontiers. Decision supervision requires fewer coded bits for the threshold queries, while no systematic saving appears for tracking.}
\label{fig:collision_structural_operational}
\vspace{-5mm}
\end{figure*}

\textbf{Structural gap in learned representations.} We next examine this structural gap in learned representations. Following Fig.~\ref{fig:experimental_overview}, an encoder maps query-independent collision probes and the query into a vector-quantized representation, whose coded rate directly quantifies the retained physical information. Using identical architectures, we train this representation under either \emph{response supervision} (predicting candidate costs) or \emph{decision supervision} (direct low-regret action selection). We then freeze each encoder and train a shared candidate scorer on top, ensuring that any performance difference reflects the information retained in the representation rather than head-specific effects. Full training and coding details appear in App.~\ref{app:collision-learning}.

\textbf{Decision supervision yields compression savings for coarse decisions, but not for fine tracking.}
In Fig.~\ref{fig:collision_structural_operational}(b), decision supervision requires fewer bits for both threshold queries across the reported tolerances, but provides no systematic saving for tracking. For example, at $\delta\approx10^{-3}$, $q_2$ requires $1.13$ bits per context under decision supervision, a saving of $1.61$ bits, whereas the saving for $q_3$ is $-1.02$ bits at $\delta\approx10^{-2}$. Lem.~\ref{lem:response_decision} orders the information required in principle, not the rates learned by different objectives. Decision sufficiency therefore does not imply that decision supervision is always the best way to learn the representation, a distinction revisited in Sec.~\ref{sec:selective_learning}.

\vspace{-2mm}
\subsection{Learned Temporal Dynamics: Resolution Shapes Information Needs}
\vspace{-1mm}
\label{sec:temporal}

We next study a nonlinear system in which a world model learns from trajectories, and examine how the information a planner needs from it varies with decision resolution and planning horizon.

\textit{\textbf{Case Study: Double-Pendulum Dynamics.}} We consider a planar double pendulum with hidden mechanism $\theta=(r_m,l_1,r_l)$, where $r_m=m_2/m_1$ is the mass ratio, $l_1$ is the first link length, and $r_l=l_2/l_1$ is the link-length ratio. The world model observes two trajectories of joint angles and angular velocities from fixed initial conditions shared across worlds, without access to $\theta$. Planning is performed from a new initial state: the planner observes the initial angles and first-joint angular velocity, and chooses the second-joint initial velocity $u=\omega_{2,0}$ from $K$ candidates. The query $q=(h^\star,T)$ specifies a target height $h^\star$ and planning horizon $T$, with cost determined by how closely the second mass's maximum height over the rollout matches $h^\star$. Larger $K$ provides finer action choices, while larger $T$ requires predicting the dynamics farther into the future. Full system and simulator details appear in App.~\ref{app:dp-wm}.

Following the pipeline in Fig.~\ref{fig:experimental_overview}, a Transformer encoder maps the observed trajectories to a query-independent physical representation, and a causal Transformer predicts future state transitions from it. We then freeze the representation, pass it through the VQ interface from Sec.~\ref{sec:collision}, and train a common scorer from simulator costs. Fixing the representation while varying $K$ or $T$ isolates the information required across different resolutions and horizons. 
As a fidelity check, the learned model substantially outperforms a mean-mechanism simulator, with median state RMSE $0.363$ versus $1.082$ at $T=4$\,s. Full architecture, training, and compression details appear in App.~\ref{app:dp-wm}--\ref{app:dp-controls}.

\begin{figure}[t]
\centering
\vspace{-2mm}
\setlength{\parskip}{0pt}
\includegraphics[width=\linewidth]{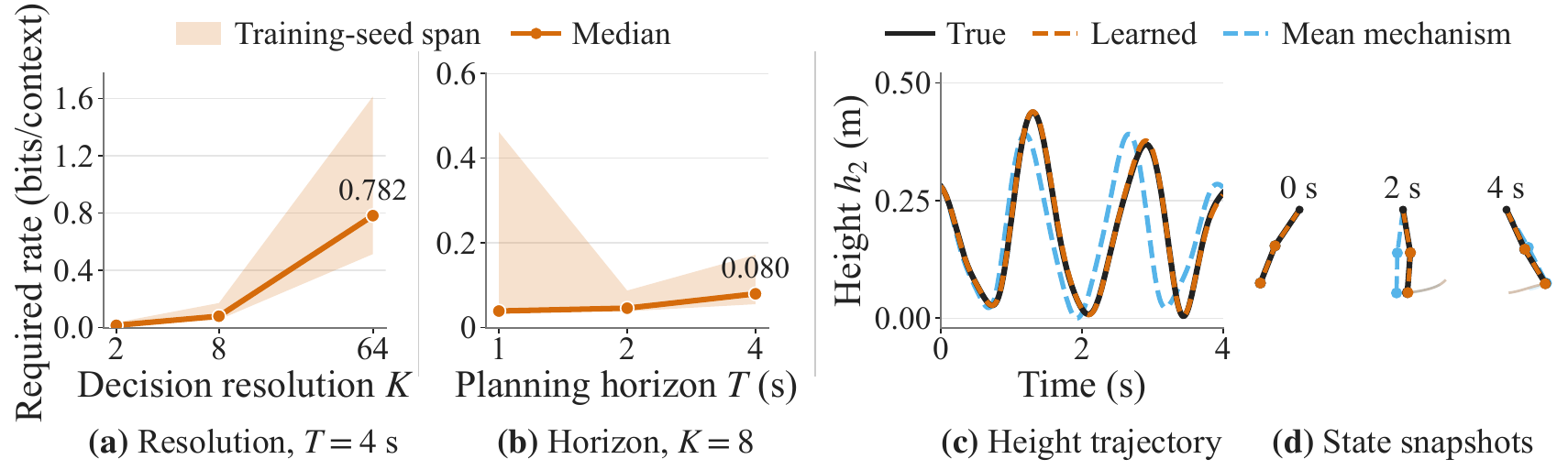}\par
\setlength{\abovecaptionskip}{4pt}
\setlength{\belowcaptionskip}{0pt}
\vspace{-2mm}
\small
\caption{\textbf{Resolution and horizon shape information needs.}
\textbf{(a,b)} Median required coded rate at normalized regret $\delta=0.02$ as candidate resolution and planning horizon vary. Shaded regions span the training seeds.
\textbf{(c,d)} Example height trajectories and corresponding state snapshots for the true dynamics, learned model, and mean-mechanism baseline.}
\label{fig:dp_resolution_horizon}
\vspace{-4mm}
\end{figure}

\textbf{Both finer action choices and longer planning horizons increase the physical information required for low-regret decisions.} At $T=4$ s and normalized regret $\delta=0.02$, increasing the candidate resolution from $K=2$ to $8$ and $64$ raises the median required rate from $0.016$ to $0.080$ and $0.782$ bits per context (Fig.~\ref{fig:dp_resolution_horizon}(a)). The same ordering holds at $\delta=0.05$ (App.~\ref{app:dp-rates}, Tab.~\ref{tab:dp-wm-resolution}). With $K=8$ and $\delta=0.02$, increasing the planning horizon from $T=1$ to $2$ and $4$ s raises the median required rate from $0.039$ to $0.046$ and $0.080$ bits per context (Fig.~\ref{fig:dp_resolution_horizon}(b)).

Together, these results extend the collision analysis to learned nonlinear dynamics: coarse decisions can retain substantially less physical information than fine ones, while longer horizons can require more information. Secs.~\ref{sec:collision} and \ref{sec:temporal} assume that candidate sets are given. Sec.~\ref{sec:planning} next considers how these requirements change when the planner must construct the candidate set itself.

\vspace{-2mm}
\section{Planning Changes What Must Be Distinguished}
\label{sec:planning}
\vspace{-2mm}

In adaptive planning, model predictions are used not only to select among candidates but also to construct them~\citep{m2023model}. We therefore ask how the required information changes with the candidate set and with the stage at which predictions enter the planner.

\vspace{-2mm}
\subsection{Candidate Sets Change Relevant Distinctions}
\vspace{-1mm}
\label{sec:support}

We first isolate how the candidate set changes the information needed for selection. With the query and selection rule fixed, the relevant distinctions can still change with the alternatives compared.

\textit{\textbf{Case Study: Push-T Planning.}}
In Push-T~\citep{chi2025diffusion}, a planar pusher moves and rotates a T-shaped object toward a target pose. A frozen visual world model~\citep{terver2025drives} predicts future features for candidate pushing sequences. The \emph{fine} query penalizes terminal pose error, while the \emph{coarse} query ignores errors within a fixed tolerance. The coarse query is used to study candidate-set dependence and information timing, while the fine query tests whether information sufficient for final selection also supports search. We compare full predicted futures with a \emph{Decision (1 bit)} representation trained for low-regret selection. Sec.~\ref{sec:search_selection} additionally uses two- and four-bit \emph{Cost} messages trained to preserve candidate costs. Full protocols are given in App.~\ref{app:pusht-search}.

\textbf{Broader candidate sets make selection more sensitive to information loss.}
Under the coarse query, the regret gap between the one-bit representation and full predictions widens from $0.019$ on the narrowest candidate sets to $0.115$ on the broadest (Fig.~\ref{fig:planning_stages}(a)). Thus, even with the query, candidate count, and selection rule fixed, information sufficient for one candidate set may no longer suffice as the alternatives change.

\begin{figure}[h]
\centering
\vspace{-3mm}
\includegraphics[width=0.85\linewidth]{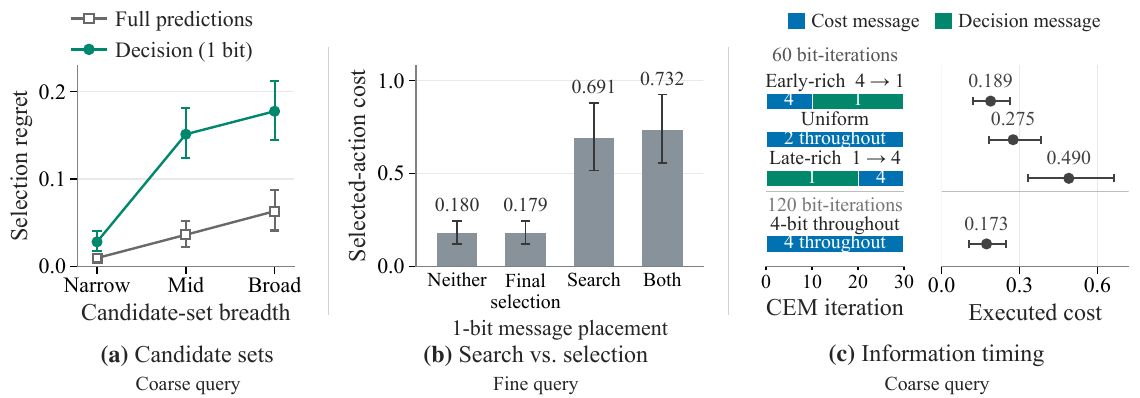}
\small
\vspace{-4mm}
\caption{\textbf{Candidate sets and planning stages shape information requirements.}
\textbf{(a)} Broader candidate sets amplify the gap between one-bit messages and full predictions.
\textbf{(b)} One-bit messages preserve final selection but substantially degrade search.
\textbf{(c)} Under a matched communication budget, richer feedback is more useful earlier in CEM search.}
\label{fig:planning_stages}
\vspace{-2mm}
\end{figure}

\vspace{-2mm}
\subsection{Different Planning Stages Require Different Distinctions}
\vspace{-1mm}
\label{sec:search_selection}

Adaptive search can require distinctions that final selection discards. For instance, in CEM~\citep{rubinstein1999cross}, each update depends only on the elite subset, whereas final selection depends only on the winning candidate.

\textbf{Information sufficient for final selection can fail during search.}
Under the fine query, we run CEM with either full predictions or one-bit Decision messages, then apply both selectors to each final 300-candidate population. Replacing full predictions only at final selection leaves mean selected-action cost nearly unchanged ($0.180$ vs.\ $0.179$). In contrast, using one-bit messages during search raises the cost to $0.691$ even when full predictions are restored for the final choice (Fig.~\ref{fig:planning_stages}(b)). Information adequate for selecting from a good candidate set can therefore fail to construct that set.

\textbf{Richer information is more valuable earlier in CEM search.}
Under the coarse query, we compare three 30-iteration schedules with the same budget of $60$ nominal bit-iterations per candidate: four-bit Cost messages early followed by one-bit Decision messages, the reverse schedule, and two-bit Cost messages throughout. Their mean executed costs are $0.189$, $0.490$, and $0.275$, respectively. Four-bit Cost messages throughout reach $0.173$ at twice the budget (Fig.~\ref{fig:planning_stages}(c)). Richer information is therefore most useful before the search distribution has narrowed, when early updates still determine which regions of the action space remain under consideration.

\vspace{-2mm}
\section{From Planning Requirements to Model Design}
\label{sec:query_design}
\vspace{-2mm}

Secs.~\ref{sec:query} and~\ref{sec:planning} identify the information planning needs, but not how a world model should preserve it or where the query should enter. We therefore ask two design questions: (1) When the representation must prioritize among predictive details, which supervision best preserves the distinctions useful for planning? (2) As objectives change, should the query affect candidate proposals, physical prediction, or both? We study these questions through matched supervision and interface comparisons. The results motivate a modular design in which the query guides candidate proposals while physical prediction remains action-conditioned.

\vspace{-2mm}
\subsection{Sufficiency Does Not Determine Supervision}
\vspace{-1mm}
\label{sec:selective_learning}

A sufficiency requirement specifies which distinctions a representation must preserve, but not which supervision will learn them most effectively. We study this question in Push-T using the frozen visual world model from Sec.~\ref{sec:support}. \emph{Value} trains the compact representation to preserve candidate costs, while \emph{Planner} targets candidate ordering and the elite-set boundary used by CEM. \emph{Hybrid} combines both signals. All three receive identical inputs and query information and use matched representation budgets, architectures, data, and optimization. Full details are given in App.~\ref{app:pusht-supervision}.

\begingroup
\setlength{\intextsep}{6pt}
\begin{figure}[!htb]
\centering
{\fontsize{8.3}{10}\selectfont
\renewcommand{\arraystretch}{0.65}
\begin{tabular}{@{}c@{\hspace{0.015151\linewidth}}c@{}}
\includegraphics[width=0.364393\linewidth]{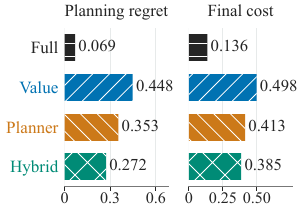} &
\includegraphics[width=0.620454\linewidth]{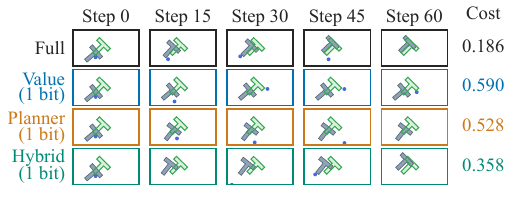} \\
(a) Quantitative summary & (b) Closed-loop execution
\end{tabular}\par}
\setlength{\abovecaptionskip}{4pt}
\setlength{\belowcaptionskip}{0pt}
\small
\caption{\textbf{Supervision shapes planning and closed-loop behavior.} \textbf{(a)} Push-T planning regret and mean closed-loop final cost (averaged over 128 episodes and 3 seeds). Full is uncompressed, others use 1-bit representations. \textbf{(b)} Representative rollouts under a common target query.}
\label{fig:supervision_results}
\end{figure}
\endgroup

\textbf{Planner-oriented supervision helps identify planning-relevant predictive details.} With one bit per candidate, mean planning regret decreases from $0.448$ for Value to $0.353$ for Planner and $0.272$ for Hybrid, compared with $0.069$ using the full predictions (Fig.~\ref{fig:supervision_results}(a)). The same ordering among the compact representations appears in closed-loop control: mean final cost is $0.498$, $0.413$, and $0.385$ for Value, Planner, and Hybrid, respectively, while Full reaches $0.136$. Fig.~\ref{fig:supervision_results}(b) shows a representative closed-loop execution under the same query target. This advantage weakens as representation capacity increases: when more predictive detail can be retained, preserving candidate costs becomes more competitive with planner-oriented supervision (App.~\ref{app:pusht-supervision}).

\vspace{-2mm}
\subsection{Where Should Query Information Enter?}
\vspace{-1mm}
\label{sec:selectivity_reuse}
\label{sec:query_search}

Sec.~\ref{sec:selective_learning} considered how to learn the information useful for a fixed planning query. We further ask where the query should enter when the objective itself changes. A new objective can change which actions are worth considering and how their outcomes are evaluated, while the physical consequence of a fixed action remains unchanged. We therefore separate two roles of the query: guiding which candidates the planner considers and conditioning the model that predicts their physical outcomes.

\textbf{Queries can focus candidate generation.}
\textit{\textbf{Case Study: Multi-Goal PushCube.}}
In ManiSkill~\citep{tao2024maniskill3}, a robot arm pushes a cube toward a query-specified target displacement. The target is not visible in the observation and is not provided to the physical predictor. We compare two proposal networks trained on the same physical trajectories. A query-aware proposal receives the target, while a query-blind proposal generates candidates from the current observation alone. Both use the same query-independent, action-conditioned world model to predict candidate outcomes, and the same query-based scorer selects the executed plan. The comparison therefore isolates whether the query is available during candidate generation. Full training and evaluation details are given in App.~\ref{app:query-search}.

\begin{figure}[h]
\centering
\vspace{-4mm}
\includegraphics[width=\linewidth]{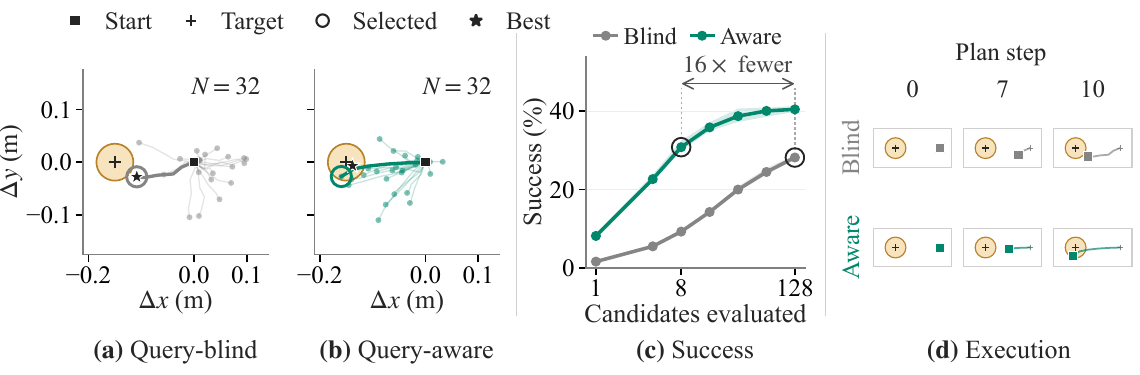}
\vspace{-8mm}
\small
\caption{\textbf{Query-guided proposals improve search efficiency.}
\textbf{(a,b)} Candidate outcomes from $32$ proposals for the same scene and target. Rings mark the candidates selected by the learned planner; stars mark the lowest-cost candidates in each set for evaluation only.
\textbf{(c)} Mean success across $512$ new scenes and four target directions as the candidate budget varies.
\textbf{(d)} Recorded cube positions along the selected plans in the illustrated scene.}
\label{fig:query_counterfactuals}
\vspace{-2mm}
\end{figure}

\textbf{Providing the query to the proposal yields useful candidates with far fewer evaluations.}
Query-blind proposals spread across multiple directions, whereas query-aware proposals concentrate more strongly toward the target (Fig.~\ref{fig:query_counterfactuals}(a,b)). Across 512 new scenes and four target directions, the query-aware proposal reaches $30.83\%$ success with only $N=8$ candidates, compared with $28.19\%$ for the query-blind proposal at $N=128$ (Fig.~\ref{fig:query_counterfactuals}(c)). Thus, query access achieves higher success with $16\times$ fewer candidate evaluations while leaving physical prediction query-independent. 

This result indicates that while the query effectively guides where the planner searches, it does not imply that physical predictions must also be query-conditioned. We examine this distinction next under shifting planning objectives.

\begin{figure}[h]
\centering
\vspace{-2mm}
\includegraphics[width=\linewidth]{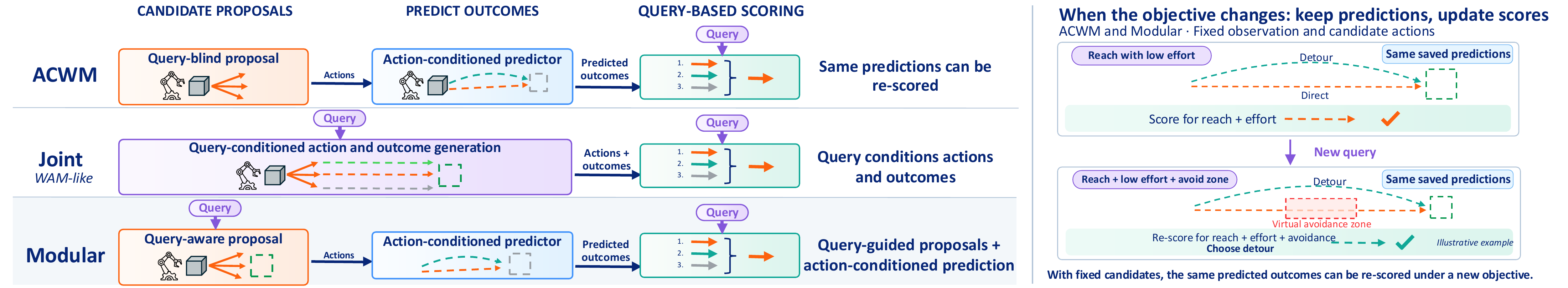}
\vspace{-6mm}
\small
\caption{\textbf{Query placement across planning interfaces.} ACWM is query-independent in both proposal and prediction; Joint conditions both on the query; and Modular guides proposals by the query while keeping physical prediction action-conditioned.}
\label{fig:query_placement}
\vspace{-2mm}
\end{figure}

\textit{\textbf{Case Study: Objective-Compositional PushCube.}}
We extend the PushCube setting above with objectives that combine target reaching with optional keep-out regions, a peak-speed constraint, and an effort penalty. These terms change how trajectories are evaluated without changing the underlying dynamics. Training uses five objective families, while evaluation includes five held-out compositions that recombine these terms or introduce an additional keep-out region.

Fig.~\ref{fig:query_placement} summarizes three design paradigms. (1) \emph{ACWM} uses a query-blind proposal and an action-conditioned physical predictor, with the query entering when predicted outcomes are scored. (2) The WAM-style \emph{Joint} model~\citep{ye2026world} conditions both candidate generation and outcome prediction on the query. (3) \emph{Modular} combines a query-aware proposal with the same action-conditioned predictor used by ACWM. All three ultimately evaluate predicted candidates under the same objective. Full training and evaluation details are given in App.~\ref{app:oc-setting}--\ref{app:oc-regret}.

\textbf{Joint specializes effectively to seen objectives, but its advantage largely disappears on held-out compositions.}
Across the five training objective families, Joint reduces regret by $0.027$ relative to ACWM. On held-out compositions, this difference shrinks to $0.004$ (Fig.~\ref{fig:query_placement_results}(a)). The same reduction in relative advantage remains under unnormalized costs (App.~\ref{app:oc-normalization}). Because ACWM and Joint also differ in predictive action exposure and in evaluating arbitrary supplied actions, we treat this as a comparison of complete interfaces rather than attributing the difference solely to query conditioning.

\phantomsection
\label{sec:modular_query}

\textbf{Separating where to look from what will happen.}
Two experiments suggest an asymmetric role for query information. Query access substantially improves candidate generation, while conditioning physical prediction on the query provides a much smaller relative advantage when objective terms are recombined. Modular separates these roles by using the query to focus candidate proposals while keeping physical prediction action-conditioned and reusable across objectives (Fig.~\ref{fig:query_placement}).

\textbf{Query-guided proposals improve candidate quality without coupling physical prediction to specific objectives.}
Modular reduces held-out regret by $0.011$ relative to ACWM and seen-objective regret by $0.014$ relative to Joint (Fig.~\ref{fig:query_placement_results}(b)). The held-out improvement comes primarily from candidate generation. Relative to ACWM, Modular reduces candidate-set regret by $0.024$ despite a $0.013$ increase in selection regret, yielding a net improvement of $0.011$ (Fig.~\ref{fig:query_placement_results}(a)). Query-aware proposals therefore uncover better actions even though selecting among the resulting candidates can become harder. This connects directly to Sec.~\ref{sec:planning}, where candidate discovery and final selection require different information.
Fig.~\ref{fig:query_placement_results}(c) provides a qualitative view of these interface differences, showing how their trajectories change across objective compositions, including one held out from training.

\begingroup
\setlength{\intextsep}{6pt}
\begin{figure}[h]
\centering
\vspace{-2mm}

{\fontsize{8.3}{10}\selectfont
\resizebox{0.95\linewidth}{!}{%
\begin{tabular}{@{}c@{\hspace{0.012626\linewidth}}c@{\hspace{0.015151\linewidth}}c@{}}
\makebox[0.287878\linewidth][c]{%
    \includegraphics[height=126bp]{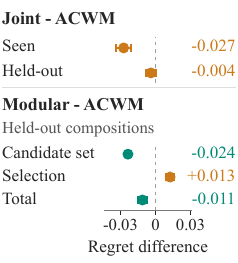}
} &
\makebox[0.241161\linewidth][c]{%
    \includegraphics[height=126bp]{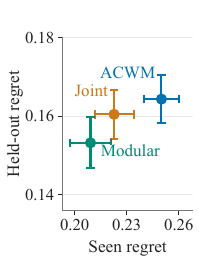}
} &
\makebox[0.443181\linewidth][c]{%
    \includegraphics[height=126bp]{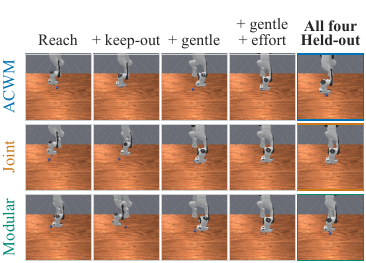}
} \\[1pt]
(a) Regret contrasts &
(b) Interface comparison &
(c) Objective compositions
\end{tabular}%
}\par}

\setlength{\abovecaptionskip}{4pt}
\setlength{\belowcaptionskip}{0pt}
\small

\caption{\textbf{Query placement shapes performance across objectives.}
\textbf{(a)} Regret contrasts and decomposition on seen and held-out compositions.
\textbf{(b)} Seen and held-out regret across the three interfaces.
\textbf{(c)} Qualitative executions across objective compositions, including one held out from training.}

\label{fig:query_placement_results}
\end{figure}
\endgroup

The Modular design retains a reusable physical-prediction interface. For a fixed observation and candidate action, its predicted outcome is query-independent and can therefore be re-scored across objectives. When the objective changes, the query-aware proposal can redirect search toward a new candidate set while reusing the same action-conditioned predictor (Fig.~\ref{fig:query_placement}). Empirically, this separation improves held-out regret over ACWM and seen-objective regret over Joint, while preserving query-independent physical predictions that can be reused across objectives.

\vspace{-2mm}
\section{Conclusion}
\label{sec:conclusion}
\vspace{-2mm}

What a world model must preserve for planning is determined not by predictive fidelity alone, but by the distinctions required for search and action choice. Our mechanism--response--decision hierarchy makes this dependence explicit: the query determines which physical variations matter and how precisely they must be resolved, while adaptive search may require information that final selection no longer needs. These requirements also shape model design. When capacity is insufficient to retain all predictive detail, planner-oriented supervision helps preserve the distinctions most useful for planning. When objectives vary, query-guided proposals can adapt search without making physical prediction objective-specific. Our Modular interface realizes this separation, improving planning while retaining reusable action-conditioned predictions across objectives. More broadly, an effective world model need not preserve every predictable detail uniformly. It should preserve the right distinctions at the resolution and stage where the planner needs them.

\section*{Acknowledgements}
Rongzhe Wei, Hans Hao-Hsun Hsu, and Pan Li are supported in part by the National Science Foundation (NSF) under awards PHY-2117997, IIS-2239565, IIS-2428777, and CCF-2402816; the NAIRR Pilot projects 250459 and 250487; the Google Cloud Research Credit (2026); the NVIDIA Academic Award (2026); and the Amazon Academic Award (2026).

\bibliography{iclr2027_conference}

@article{watter2015embed,
  title={Embed to control: A locally linear latent dynamics model for control from raw images},
  author={Watter, Manuel and Springenberg, Jost and Boedecker, Joschka and Riedmiller, Martin},
  journal={Advances in neural information processing systems},
  volume={28},
  year={2015}
}

@inproceedings{hafner2019learning,
  title={Learning latent dynamics for planning from pixels},
  author={Hafner, Danijar and Lillicrap, Timothy and Fischer, Ian and Villegas, Ruben and Ha, David and Lee, Honglak and Davidson, James},
  booktitle={International conference on machine learning},
  pages={2555--2565},
  year={2019},
  organization={PMLR}
}

@article{hafner2019dream,
  title={Dream to control: Learning behaviors by latent imagination},
  author={Hafner, Danijar and Lillicrap, Timothy and Ba, Jimmy and Norouzi, Mohammad},
  journal={arXiv preprint arXiv:1912.01603},
  year={2019}
}

@article{schrittwieser2020mastering,
  title={Mastering atari, go, chess and shogi by planning with a learned model},
  author={Schrittwieser, Julian and Antonoglou, Ioannis and Hubert, Thomas and Simonyan, Karen and Sifre, Laurent and Schmitt, Simon and Guez, Arthur and Lockhart, Edward and Hassabis, Demis and Graepel, Thore and others},
  journal={Nature},
  volume={588},
  number={7839},
  pages={604--609},
  year={2020},
  publisher={Nature Publishing Group UK London}
}

@article{ha2018world,
  title={World models},
  author={Ha, David and Schmidhuber, J{\"u}rgen},
  journal={arXiv preprint arXiv:1803.10122},
  volume={2},
  number={3},
  pages={440},
  year={2018}
}

@article{hansen2022temporal,
  title={Temporal difference learning for model predictive control},
  author={Hansen, Nicklas and Wang, Xiaolong and Su, Hao},
  journal={arXiv preprint arXiv:2203.04955},
  year={2022}
}

@inproceedings{hansen2024td,
  title={Td-mpc2: Scalable, robust world models for continuous control},
  author={Hansen, Nick and Su, Hao and Wang, Xiaolong},
  booktitle={International Conference on Learning Representations},
  volume={2024},
  pages={47376--47405},
  year={2024}
}

@article{hafner2025mastering,
  title={Mastering diverse control tasks through world models},
  author={Hafner, Danijar and Pasukonis, Jurgis and Ba, Jimmy and Lillicrap, Timothy},
  journal={Nature},
  volume={640},
  number={8059},
  pages={647--653},
  year={2025},
  publisher={Nature Publishing Group UK London}
}

@article{grimm2020value,
  title={The value equivalence principle for model-based reinforcement learning},
  author={Grimm, Christopher and Barreto, Andr{\'e} and Singh, Satinder and Silver, David},
  journal={Advances in neural information processing systems},
  volume={33},
  pages={5541--5552},
  year={2020}
}

@article{zhou2024dino,
  title={Dino-wm: World models on pre-trained visual features enable zero-shot planning},
  author={Zhou, Gaoyue and Pan, Hengkai and LeCun, Yann and Pinto, Lerrel},
  journal={arXiv preprint arXiv:2411.04983},
  year={2024}
}

@article{assran2025v,
  title={V-jepa 2: Self-supervised video models enable understanding, prediction and planning},
  author={Assran, Mido and Bardes, Adrien and Fan, David and Garrido, Quentin and Howes, Russell and Muckley, Matthew and Rizvi, Ammar and Roberts, Claire and Sinha, Koustuv and Zholus, Artem and others},
  journal={arXiv preprint arXiv:2506.09985},
  year={2025}
}

@article{alemi2016deep,
  title={Deep variational information bottleneck},
  author={Alemi, Alexander A and Fischer, Ian and Dillon, Joshua V and Murphy, Kevin},
  journal={arXiv preprint arXiv:1612.00410},
  year={2016}
}

@article{islam2022representation,
  title={Representation learning in deep rl via discrete information bottleneck},
  author={Islam, Riashat and Zang, Hongyu and Tomar, Manan and Didolkar, Aniket and Islam, Md Mofijul and Arnob, Samin Yeasar and Iqbal, Tariq and Li, Xin and Goyal, Anirudh and Heess, Nicolas and others},
  journal={arXiv preprint arXiv:2212.13835},
  year={2022}
}

@article{zhang2026world,
  title={World Action Planner: Generalizable Decision-Making with Action-Conditioned World Models},
  author={Zhang, Xiangcheng and Du, Yilun},
  journal={arXiv preprint arXiv:2607.27599},
  year={2026}
}

@article{lambert2020objective,
  title={Objective mismatch in model-based reinforcement learning},
  author={Lambert, Nathan and Amos, Brandon and Yadan, Omry and Calandra, Roberto},
  journal={arXiv preprint arXiv:2002.04523},
  year={2020}
}

@article{mandi2024decision,
  title={Decision-focused learning: Foundations, state of the art, benchmark and future opportunities},
  author={Mandi, Jayanta and Kotary, James and Berden, Senne and Mulamba, Maxime and Bucarey, Victor and Guns, Tias and Fioretto, Ferdinando},
  journal={Journal of Artificial Intelligence Research},
  volume={80},
  pages={1623--1701},
  year={2024}
}

@article{ye2026world,
  title={World action models are zero-shot policies},
  author={Ye, Seonghyeon and Ge, Yunhao and Zheng, Kaiyuan and Gao, Shenyuan and Yu, Sihyun and Kurian, George and Indupuru, Suneel and Tan, You Liang and Zhu, Chuning and Xiang, Jiannan and others},
  journal={arXiv preprint arXiv:2602.15922},
  year={2026}
}

@article{team2026motubrain,
  title={Motubrain: An advanced world action model for robot control},
  author={Team, MotuBrain and Xiang, Chendong and Bao, Fan and Liu, Haitian and Tan, Hengkai and Bi, Hongzhe and Li, James and Liu, Jiabao and Pang, Jingrui and Jing, Kiro and others},
  journal={arXiv preprint arXiv:2604.27792},
  year={2026}
}

@article{li2006towards,
  title={Towards a unified theory of state abstraction for MDPs.},
  author={Li, Lihong and Walsh, Thomas J and Littman, Michael L},
  journal={AI\&M},
  volume={1},
  number={2},
  pages={3},
  year={2006}
}

@article{ferns2012metrics,
  title={Metrics for finite Markov decision processes},
  author={Ferns, Norman and Panangaden, Prakash and Precup, Doina},
  journal={arXiv preprint arXiv:1207.4114},
  year={2012}
}

@inproceedings{farahmand2017value,
  title={Value-aware loss function for model-based reinforcement learning},
  author={Farahmand, Amir-massoud and Barreto, Andre and Nikovski, Daniel},
  booktitle={Artificial Intelligence and Statistics},
  pages={1486--1494},
  year={2017},
  organization={PMLR}
}

@inproceedings{silver2017predictron,
  title={The predictron: End-to-end learning and planning},
  author={Silver, David and Hasselt, Hado and Hessel, Matteo and Schaul, Tom and Guez, Arthur and Harley, Tim and Dulac-Arnold, Gabriel and Reichert, David and Rabinowitz, Neil and Barreto, Andre and others},
  booktitle={International conference on machine learning},
  pages={3191--3199},
  year={2017},
  organization={PMLR}
}

@article{oh2017value,
  title={Value prediction network},
  author={Oh, Junhyuk and Singh, Satinder and Lee, Honglak},
  journal={Advances in neural information processing systems},
  volume={30},
  year={2017}
}

@article{cheng2026sage,
  title={SAGE: Subgoal-Conditioned Action Generation for Latent World Model Planning},
  author={Cheng, Letian and Zhang, Qi and Wang, Yisen},
  journal={arXiv preprint arXiv:2607.17973},
  year={2026}
}

@article{thorpe2026physically,
  title={Physically Viable World Models: A Case for Query-Conditioned Embodied AI},
  author={Thorpe, Adam J and Tretiakov, Stepan and Hsiao, Cheng-Hsi and Low, Su Ann and Li, Xingjian and Iqbal, Hassan and Bhatt, Neel P and Topcu, Ufuk and Kumar, Krishna},
  journal={arXiv preprint arXiv:2605.30542},
  year={2026}
}

@article{oquab2023dinov2,
  title={Dinov2: Learning robust visual features without supervision},
  author={Oquab, Maxime and Darcet, Timoth{\'e}e and Moutakanni, Th{\'e}o and Vo, Huy and Szafraniec, Marc and Khalidov, Vasil and Fernandez, Pierre and Haziza, Daniel and Massa, Francisco and El-Nouby, Alaaeldin and others},
  journal={arXiv preprint arXiv:2304.07193},
  year={2023}
}

@article{terver2025drives,
  title={What drives success in physical planning with joint-embedding predictive world models?},
  author={Terver, Basile and Yang, Tsung-Yen and Ponce, Jean and Bardes, Adrien and LeCun, Yann},
  journal={arXiv preprint arXiv:2512.24497},
  year={2025}
}

@article{tao2024maniskill3,
  title={Maniskill3: Gpu parallelized robotics simulation and rendering for generalizable embodied ai},
  author={Tao, Stone and Xiang, Fanbo and Shukla, Arth and Qin, Yuzhe and Hinrichsen, Xander and Yuan, Xiaodi and Bao, Chen and Lin, Xinsong and Liu, Yulin and Chan, Tse-kai and others},
  journal={arXiv preprint arXiv:2410.00425},
  year={2024}
}

@article{arumugam2022deciding,
  title={Deciding what to model: Value-equivalent sampling for reinforcement learning},
  author={Arumugam, Dilip and Van Roy, Benjamin},
  journal={Advances in neural information processing systems},
  volume={35},
  pages={9024--9044},
  year={2022}
}

@inproceedings{nair2020goal,
  title={Goal-aware prediction: Learning to model what matters},
  author={Nair, Suraj and Savarese, Silvio and Finn, Chelsea},
  booktitle={International conference on machine learning},
  pages={7207--7219},
  year={2020},
  organization={PMLR}
}

@article{zhang2020learning,
  title={Learning invariant representations for reinforcement learning without reconstruction},
  author={Zhang, Amy and McAllister, Rowan and Calandra, Roberto and Gal, Yarin and Levine, Sergey},
  journal={arXiv preprint arXiv:2006.10742},
  year={2020}
}

@article{chi2025diffusion,
  title={Diffusion policy: Visuomotor policy learning via action diffusion},
  author={Chi, Cheng and Xu, Zhenjia and Feng, Siyuan and Cousineau, Eric and Du, Yilun and Burchfiel, Benjamin and Tedrake, Russ and Song, Shuran},
  journal={The International Journal of Robotics Research},
  volume={44},
  number={10-11},
  pages={1684--1704},
  year={2025},
  publisher={Sage Publications Sage UK: London, England}
}

@article{rubinstein1999cross,
  title={The cross-entropy method for combinatorial and continuous optimization},
  author={Rubinstein, Reuven},
  journal={Methodology and computing in applied probability},
  volume={1},
  number={2},
  pages={127--190},
  year={1999},
  publisher={Springer}
}

@article{williams2017model,
  title={Model predictive path integral control: From theory to parallel computation},
  author={Williams, Grady and Aldrich, Andrew and Theodorou, Evangelos A},
  journal={Journal of Guidance, Control, and Dynamics},
  volume={40},
  number={2},
  pages={344--357},
  year={2017},
  publisher={American Institute of Aeronautics and Astronautics}
}

@article{m2023model,
  title={Model-based reinforcement learning: A survey},
  author={M. Moerland, Thomas and Broekens, Joost and Plaat, Aske and M. Jonker, Catholijn},
  journal={Foundations and Trends in Machine Learning},
  volume={16},
  number={1},
  pages={1--118},
  year={2023},
  publisher={Emerald Publishing Limited}
}
\bibliographystyle{iclr2027_conference}
\newpage
\onecolumn
\appendix

\renewcommand{\topfraction}{0.92}
\renewcommand{\bottomfraction}{0.85}
\renewcommand{\textfraction}{0.06}
\renewcommand{\floatpagefraction}{0.78}
\setcounter{topnumber}{4}
\setcounter{bottomnumber}{3}
\setcounter{totalnumber}{7}
\setlength{\textfloatsep}{7pt plus 1pt minus 1pt}
\setlength{\floatsep}{5pt plus 1pt minus 1pt}
\setlength{\intextsep}{6pt plus 1pt minus 1pt}
\setlength{\abovecaptionskip}{2pt}
\setlength{\belowcaptionskip}{0pt}

\begingroup
\setlength{\parindent}{0pt}
\setlength{\parskip}{0pt}
\hypersetup{hidelinks}
{\color{black!65}\hrule height .65pt}
\vspace{9pt}
{\centering
 {\fontsize{9.5}{11}\selectfont\scshape\color{black!60} Supplementary\enspace Material\par}
 \vspace{4pt}
 {\fontsize{28}{31}\selectfont\bfseries Appendix\par}
}
\vspace{10pt}
{\color{black!65}\hrule height .35pt}
\vspace{10pt}
{\centering\large\bfseries Contents\par}
\vspace{6pt}
\etocdepthtag.toc{mtappendix}
\etocsettagdepth{mtchapter}{none}
\etocsettagdepth{mtappendix}{subsection}
\etocsettocstyle{}{}
\small
\setlength{\parskip}{0pt}
\makeatletter
\renewcommand*\l@section[2]{\vspace{3pt}\begingroup\renewcommand*\@dotsep{10000}\@dottedtocline{1}{0em}{1.6em}{\bfseries #1}{\bfseries #2}\endgroup}
\renewcommand*\l@subsection[2]{\@dottedtocline{2}{1.6em}{2.7em}{#1}{#2}}
\makeatother
\setcounter{tocdepth}{2}
\tableofcontents
\endgroup
\newpage

\vspace{-2mm}
\section{Formal Properties and Planner Requirements}
\vspace{-2mm}

\label{app:response-regret}

This section supplies the assumptions and proofs for the hierarchy in Sec.~\ref{sec:hierarchy} and the response-to-decision guarantee in Sec.~\ref{sec:query_structure}. Throughout, lower cost is better. The query, candidate set, and initial conditions are fixed when comparing physical worlds.

\vspace{-1mm}
\subsection{Exact Refinement and the Role of Ties}
\vspace{-1mm}

Fix a deterministic tie rule shared across worlds. For finite candidate sets, any fixed ordering of candidates suffices. Identical physical dynamics yield the same outcome for every candidate and hence the same candidate-cost profile. Equal profiles yield equal minimizer sets, and the shared tie rule selects the same action. This demonstrates $\Pi_M\preceq\Pi_R^{C,q}\preceq\Pi_D^{C,q}$. Mechanism sufficiency here concerns equality of dynamics, not necessarily equality of parameter vectors when a parameterization is redundant. Approximate parameter recovery requires an additional continuity bound before it implies response accuracy.

Sharing an optimal action is not itself an equivalence relation. The minimizer sets $\{A_1\}$, $\{A_1,A_2\}$, and $\{A_2\}$ overlap for adjacent pairs but not for the first and third. The exact decision partition therefore uses a selected minimizer, whereas an approximate decision cover only requires a common low-regret action in each covering set.

\vspace{-1mm}
\subsection{Response Accuracy Certifies Low Regret}
\vspace{-1mm}

\begin{proof}[Proof of Lem.~\ref{lem:response_decision}]
    Let $A^\star$ minimize the true cost and let $\widehat A$ minimize the predicted cost. Suppressing the fixed query, world, and candidate set gives
    \begin{align}
    J(\widehat A)-J(A^\star)
    &=J(\widehat A)-\widehat J(\widehat A)
    +\widehat J(\widehat A)-\widehat J(A^\star)
    +\widehat J(A^\star)-J(A^\star)\\
    &\leq \epsilon+0+\epsilon=2\epsilon.
    \end{align}
    For each radius-$\epsilon$ response-cover element, select an action minimizing its common decoded profile. Every world covered by that element admits this action with regret at most $2\epsilon$. These sets form a decision cover of no larger size, proving $N_D^{C,q}(2\epsilon)\leq N_R^{C,q}(\epsilon)$. The argument assumes that the decoded profile attains its minimum. This is automatic for finite $C$, and holds for continuous profiles on a compact candidate interval as used in App.~\ref{app:collision-covers}.
\end{proof}
The converse fails even with two candidates. Let $J_b(A_1)=b$ and $J_b(A_2)=b+1$. Selecting $A_1$ requires no information about $b$ and has zero regret for every world. Approximating both costs uniformly still requires resolving the offset. On a bounded interval for $b$, the response cover can be arbitrarily large while the exact decision cover has size one.

\textbf{Structural complexity and measured rate.} An $N$-element cover admits a fixed-length description using at most $\lceil \log_2 N\rceil$ bits, while the entropy of a cell label also depends on the world distribution. Rate--distortion instead optimizes a statistical information measure, whereas our learned studies report the coded stream produced by a particular encoder and lossless coding scheme. These quantities capture different notions of complexity. In particular, a negative learned response-minus-decision rate saving does not contradict the cover inequality.

\vspace{-1mm}
\subsection{What an Adaptive Planner Must Preserve}
\vspace{-1mm}

Sec.~\ref{sec:search_selection} shows that information sufficient for final selection can be insufficient during adaptive search. The reason is that different planner operations depend on different summaries of the candidate costs. With the candidate actions and planner state fixed, final selection needs only the minimizing candidate, whereas adaptive planning methods such as CEM~\citep{rubinstein1999cross} and MPPI~\citep{williams2017model} may require richer summaries of the candidate costs: CEM uses the elite subset, while MPPI uses normalized relative cost weights (Tab.~\ref{tab:planner-information}).

\begin{table}[htbp]
\centering
\small
\caption{Information sufficient for representative planner operations, with candidate actions, planner state, and random draws fixed.}
\label{tab:planner-information}
\vspace{1mm}
\begin{tabular}{P{.20\linewidth}P{.27\linewidth}P{.43\linewidth}}
\toprule
\textbf{Planner operation} & \textbf{Required cost information} & \textbf{Why it is sufficient} \\
\midrule
Final selection
& Minimizing candidate index
& Determines the action selected from the current candidate set. \\
\midrule
CEM update
& Elite subset (top-$k$ lowest-cost candidates)
& The actions in the elite set determine the refitted sampling distribution. \\
\midrule
MPPI update
& Normalized weights
  $w_i \propto \exp(-J_i/\lambda)$
& The update averages candidate actions using these weights. Adding the same constant to every candidate cost leaves all normalized weights unchanged, so the absolute cost offset is unnecessary.\\
\bottomrule
\end{tabular}
\end{table}

The complete candidate-cost profile is sufficient for all three operations, but generally contains more information than the planner actually uses. In adaptive search, however, the relevant update information must be preserved throughout the search: changing an intermediate update changes the subsequent sampling distribution and therefore the candidates considered later.

\vspace{-2mm}
\section{Collision: Analytic Structure and Learned Representations}
\vspace{-2mm}

\label{app:collision-covers}\label{app:collision-learning}

While Prop.~\ref{prop:collision} analytically establishes the query-dependent scaling of cover complexity, our empirical study measures the actual coded representations across disjoint physical worlds. Throughout, we strictly distinguish the theoretical cover complexity in continuous action spaces from the empirical rate achieved by a finite-candidate learner.

\vspace{-1mm}
\subsection{Query-Relevant Physical Quantities}
\vspace{-1mm}

Let $\theta=(m_1,m_2,e)$ denote the collision mechanism, with body 2 initially at rest. For an incoming velocity $v$, conservation of momentum and restitution give
\begin{equation}
m_1v=m_1v_1'+m_2v_2',\qquad v_2'-v_1'=ev,
\end{equation}
and hence
\begin{equation}
v_2'=\kappa v,
\qquad
\kappa=\frac{(1+e)m_1}{m_1+m_2}.
\end{equation}
If instead body 1 receives an impulse $u$, then $m_1v=u$ and
\begin{equation}
v_2'=\lambda u,
\qquad
\lambda=\frac{1+e}{m_1+m_2}.
\end{equation}
Thus, the fixed-velocity and fixed-impulse settings depend on different physical quantities, $\kappa$ and $\lambda$. Over $m_1,m_2\in[0.5,2]$ and $e\in[0,1]$, their ranges are $\kappa\in[0.2,1.6]$ and $\lambda\in[0.25,2]$.

Neither quantity determines the other over the mechanism family. Mechanisms $(1,1,0)$ and $(0.75,0.75,0)$ share $\kappa=1/2$ but have different $\lambda$, while $(1,1,0)$ and $(2,1,0.5)$ share $\lambda=1/2$ but have different $\kappa$. Therefore, queries $q_1$ and $q_2$ can require different physical distinctions within the same mechanism family.

\vspace{-1mm}
\subsection{Threshold and Tracking Covers}
\vspace{-1mm}

\textbf{Threshold decisions.}
Write the range of $\kappa$ as $[a,b]$, with $0<a<b$, and assume $c_1/v_0\in(a,b)$. For $q_1$, the candidate-cost profile is $(0,c_1-\kappa v_0)$, and the two candidates are uniquely optimal on opposite sides of the threshold. The two actions therefore suffice to cover all worlds at zero regret, while no single action does, giving $N_D^{C,q_1}(0)=2$. A radius-$\epsilon$ response-cover element can span at most $2\epsilon/v_0$ in $\kappa$, and intervals of this width suffice. Hence,
\begin{equation}
N_R^{C,q_1}(\epsilon)
=
\left\lceil
\frac{v_0(b-a)}{2\epsilon}
\right\rceil.
\end{equation}
The same argument applies to $q_2$ with $\lambda,u_0,c_2$. For sufficiently small positive regret tolerance, both actions remain necessary, whereas the response cover continues to grow as $\epsilon\to0$. The matched-tolerance log-cover gap therefore diverges.

\textbf{Continuous tracking.}
Let $t=v_{\mathrm{tar}}>0$ and let $C=[v_{\mathrm{lo}},v_{\mathrm{hi}}]$, with $v_{\mathrm{lo}}>0$, contain every optimum $t/\kappa$. Assume one endpoint lies on the same side of all optima, for example $v_{\mathrm{hi}}\geq t/a$. The optimal cost is zero. For $\Delta<t$, an action $v$ has regret at most $\Delta$ only when
\begin{equation}
\kappa
\in
\left[
\frac{t-\Delta}{v},
\frac{t+\Delta}{v}
\right]
\cap[a,b].
\end{equation}
This interval has length at most $2\Delta/v_{\mathrm{lo}}$, so $\Omega(1/\Delta)$ actions are necessary. Conversely, partition $[a,b]$ into intervals of width at most $2a\Delta/t$ and choose $v=t/\kappa_0$ at each midpoint $\kappa_0$. Every world in that interval then has regret at most
$(t/a)|\kappa-\kappa_0|\leq\Delta$, giving the matching upper bound.

For response profiles, the reverse triangle inequality gives
\begin{equation}
\sup_{v\in C}
\left|
|\kappa v-t|-|\kappa'v-t|
\right|
\leq
v_{\mathrm{hi}}|\kappa-\kappa'|.
\end{equation}
Under the endpoint condition above, equality is attained at the common upper endpoint. Thus
$N_R^{C,q_3}(\epsilon)=\Theta(1/\epsilon)$ and
$N_D^{C,q_3}(\Delta)=\Theta(1/\Delta)$, so their log-cover difference remains bounded at $\Delta=2\epsilon$. These asymptotics concern the continuous candidate interval and do not apply to the fixed bank of 64 actions used in the learned experiment, whose exact decision cover is finite.

\vspace{-1mm}
\subsection{Worlds, Observations, and Queries}
\vspace{-1mm}

We sample independent $\log m_1,\log m_2\sim\mathcal U[\log 0.5,\log 2]$ and $e\sim\mathcal U[0,1]$, yielding $200{,}000/20{,}000/50{,}000$ disjoint training/validation/test mechanisms. Each world is observed through two noiseless collisions on reset copies of the same mechanism: one with unit incoming velocity and one with unit impulse. The resulting observation is $((m_1-em_2)/(m_1+m_2),\kappa,\lambda)$. These three quantities are standardized using training statistics and zero-padded to width 256 before encoding. The model receives these observations and the query, but not the mechanism $\theta$.

Queries $q_1$ and $q_2$ use the two actions defined in Sec.~\ref{sec:collision}. With $v_0=u_0=v_{\mathrm{tar}}=1$, we set $c_1$ and $c_2$ to the median values of $\kappa$ and $\lambda$, respectively, in a separate 100,000-world calibration sample, giving $c_1=0.7259$ and $c_2=0.7041$. Query $q_3$ uses 64 geometrically spaced velocities on $[0.59375,5.25]$. Since $v_{\mathrm{tar}}=1$ and $\kappa\in[0.2,1.6]$, the continuous optimum $v_{\mathrm{tar}}/\kappa$ lies in $[0.625,5]$; we extend this range by 5\% at both ends to ensure that all continuous optima lie strictly inside the candidate interval. The candidate set is fixed across worlds and independent of the observed record. All normalization statistics are fixed before test evaluation.

We normalize regret by a query-specific scale $s_q$, defined as the median positive candidate regret on the training worlds plus $10^{-8}$. The resulting scales are approximately $0.1749$, $0.1689$, and $0.5320$ for $q_1$, $q_2$, and $q_3$, respectively. Reported regret is $\operatorname{Reg}_q(A;\theta,C)/s_q$, following the paper's cost-minimization convention.

\vspace{-1mm}
\subsection{Representation, Supervision, and Common Scorer}
\label{app:collision-model}
\vspace{-1mm}

The encoder maps each physical observation to a variable-length discrete message using product vector quantization (VQ). It selects an active prefix of at most 32 positions, each with a 256-entry codebook of 32-dimensional vectors. The forward pass uses discrete lengths and symbols, with straight-through gradients for the encoder and exponential-moving-average updates for the codebooks. Only the reconstructed symbols and encoded length are available downstream, continuous encoder features cannot bypass the discrete message. Tab.~\ref{tab:compression-architecture} summarizes the architecture, which is also used in the Double Pendulum study.

\begin{table}[!hbp]
\centering
\small
\caption{Shared coded-rate architecture.}
\label{tab:compression-architecture}
\vspace{1mm}
\begin{tabular}{P{.24\linewidth}P{.68\linewidth}}
\toprule
\textbf{Component} & \textbf{Architecture and information access} \\
\midrule
Physical encoder
& Physical input and public embeddings, followed by three width-256 SiLU/LayerNorm layers with separate symbol and length outputs. \\
\midrule
Public embeddings
& Query and known-condition embeddings, each with 128 dimensions. Fixed auxiliary features contain no world information. \\
\midrule
Message decoder
& Active codewords concatenated with zeros for inactive positions and a 16-dimensional length embedding, then projected to 128 dimensions. \\
\midrule
Candidate head
& Three-layer $[\mathrm{message},\mathrm{public},\mathrm{action}]\to256\to256\to1$ SiLU network, with a three-layer width-128 action embedding. \\
\midrule
Entropy model
& Two causal Transformer blocks with width 128, four heads, feed-forward width 512, and no dropout. The model predicts message length followed by active symbols conditioned on preceding symbols and public inputs. \\
\bottomrule
\end{tabular}
\end{table}

Let $\widehat J_q(A_j)$ denote the response head's predicted cost and $s_j$ the decision head's preference score. With $\pi_j=\operatorname{softmax}(s)_j$, the two native objectives are
\begin{align}
\mathcal L_{\mathrm{resp}} &= \mathbb E\!\left[\max_{j\leq |C|}\frac{|\widehat J_q(A_j)-J_q(A_j;\theta)|}{s_q}\right], \label{eq:collision-response-loss}\\
\mathcal L_{\mathrm{dec}} &= \mathbb E\!\left[\sum_j\pi_j\frac{\operatorname{Reg}_q(A_j;\theta,C)}{s_q}+\tau_q\sum_j\pi_j\log\pi_j\right]. \label{eq:collision-decision-loss}
\end{align}
Decision supervision minimizes expected normalized regret under the score-induced distribution. We include an entropy bonus, $\tau_q\sum_j\pi_j\log\pi_j$, to discourage premature collapse of this distribution during training, evaluation still selects the highest-scoring candidate. The softmax temperature is fixed to one, and $\tau_q$ is set to $0.3$, $0.1$, and $0$ for $q_1$, $q_2$, and $q_3$, respectively. Response and decision models use the same encoder and candidate-head architecture. Exact simulator costs provide supervision, while mechanism labels are never given to the encoder.

For each selected compression checkpoint, we freeze the encoder, codebooks, and symbol-to-embedding decoder, then train a fresh common scorer from scratch on the same training worlds using Eq.~\ref{eq:collision-decision-loss}. The scorer receives only the decoded message, query, and candidate action, with the same query-specific $\tau_q$ for both representations. Its validation loss selects the scorer checkpoint, and evaluation selects the candidate with the largest score using a fixed deterministic tie rule. This matched readout isolates differences in the information retained by the representation rather than differences between the original task-specific heads.

\vspace{-1mm}
\subsection{Compression and Lossless Coding}
\label{app:collision-compression}
\vspace{-1mm}

Compression starts from the best task-only checkpoint selected on validation data. We use two reference losses to define how much task performance may degrade as the representation is compressed. $D_{\mathrm{full}}$ is the pre-compression task loss obtained from the learned representation without a rate constraint. $D_0$ is the loss of a zero-information baseline that does not distinguish between physical worlds. Under response supervision, this baseline predicts the same constant candidate-cost vector for every world. Under decision supervision, it always selects the same fixed action, chosen to minimize average training cost. These reference losses are evaluated separately on the training and validation splits.

For nine tolerance levels spanning $\rho=0.01$ to $0.95$, we define
\begin{equation}
D_{\max}(\rho)=D_{\mathrm{full}}+\rho(D_0-D_{\mathrm{full}}).
\end{equation}
Smaller $\rho$ keeps the compressed representation close to its pre-compression task performance, while larger $\rho$ permits more degradation toward the zero-information baseline.

At each tolerance, we reduce the representation rate while penalizing task loss once it approaches the allowed degradation:
\begin{align}
\mathcal L_{\mathrm{compress}} &= \widehat R+1000\left[\frac{D_{\mathrm{batch}}-D_{\mathrm{full},\mathrm{train}}}{\eta}-\frac12\right]_+ +0.25\mathcal L_{\mathrm{VQ}},\\
\eta &= \rho(D_{0,\mathrm{train}}-D_{\mathrm{full},\mathrm{train}}),
\end{align}
where $[x]_+=\max(x,0)$, $\widehat R$ is the differentiable rate surrogate, and $D_{\mathrm{batch}}$ is the native minibatch loss from Eq.~\ref{eq:collision-response-loss} or~\ref{eq:collision-decision-loss}. The hinge penalty begins halfway between the pre-compression loss and the allowed tolerance, providing a margin before $D_{\max}$ is reached.

At each compression stage, we refit the entropy model and select the eligible checkpoint with the lowest validation symbol negative log likelihood (NLL), subject to $D_{\mathrm{val}}\leq D_{\max,\mathrm{val}}$. The checkpoint from the preceding stage remains eligible, and the selected representation initializes the next, looser stage. Entropy refitting updates only the coding probabilities while keeping the representation fixed. Both supervision settings use the training budgets in Tab.~\ref{tab:collision-training}, with at least 2,000 steps per compression stage.

Unless noted otherwise, optimization uses AdamW with learning rate $3\times10^{-4}$, weight decay $10^{-4}$, batch size 1024, and gradient clipping at one. Entropy refitting uses Adam at $10^{-3}$.

\begin{table}[h]
\centering
\small
\caption{Training budgets and checkpoint selection for the Collision comparison.}
\label{tab:collision-training}
\vspace{1mm}
\setlength{\tabcolsep}{4pt}
\begin{tabular}{lrrP{.40\linewidth}}
\toprule
\textbf{Stage} & \textbf{Maximum steps} & \textbf{Check interval} & \textbf{Checkpoint rule} \\
\midrule
Task-only training
& 8,000 & 500
& Lowest native validation loss. \\
\midrule
Each compression stage
& 4,000 & 500
& Lowest refitted validation symbol NLL among checkpoints satisfying the native-loss tolerance. \\
\midrule
Entropy refit
& 3,000 & 250
& Lowest validation symbol NLL, with patience three. \\
\midrule
Common scorer
& 8,000 & 500
& Lowest validation decision loss, with patience five after 3,000 steps. \\
\bottomrule
\end{tabular}
\end{table}

Learning rates use linear warmup over five training-set passes followed by cosine decay. The VQ commitment penalty is ramped during task-only training and fixed during compression.

Reported rates are the lengths of the complete lossless coded streams, including message-length symbols and coding overhead, divided by the number of contexts. Each stream is decoded and verified against the original symbols. Shared model parameters and public side information are excluded from the rate. Validation NLL is used for checkpoint selection, while reported rates are computed from the actual range-coded streams. The zero-rate endpoint sends no message and uses a single fixed action chosen to minimize average training cost. Its test regret anchors the rate--regret frontier.

\vspace{-1mm}
\subsection{Frontier Construction and Supported Comparisons}
\label{app:collision-frontiers}
\vspace{-1mm}

For each training seed, we retain the nondominated rate--regret points and linearly interpolate between adjacent points, including the fixed-action zero-rate endpoint. The required rate at regret tolerance $\delta$ is defined only when $\delta$ lies within the achieved frontier. For a seed supported by both supervision methods, the paired rate saving is the response-supervised required rate minus the decision-supervised required rate at the same tolerance. Five training seeds are run per method, and plots average the supported seed-level quantities at each tolerance. Standard deviations summarize variation across training seeds.

Tab.~\ref{tab:collision-primary} reports the saved grid points nearest the nominal regret tolerances. Both threshold queries show positive mean paired savings at the reported benchmark tolerances, with all five training seeds contributing. The ordering can reverse elsewhere along the frontier. For $q_3$ at $\delta\approx0.001$, some trained frontiers do not reach the target regret, so their required rates are undefined. Marginal response and decision rates can therefore average over different supported seed sets from the paired saving, and the reported values need not subtract exactly.

\begin{table}[h]
\centering
\small
\caption{Required coded rates and paired response-minus-decision savings (bits per context). Each method uses five training seeds; $\dagger$ marks partial frontier support.}
\label{tab:collision-primary}
\vspace{1mm}
\setlength{\tabcolsep}{4pt}
\begin{tabular}{@{}crrr@{\hspace{12pt}}rrr@{\hspace{12pt}}rrr@{}}
\toprule
& \multicolumn{3}{c}{$q_1$: velocity threshold}
& \multicolumn{3}{c}{$q_2$: impulse threshold}
& \multicolumn{3}{c}{$q_3$: tracking} \\
\cmidrule(lr){2-4}\cmidrule(lr){5-7}\cmidrule(l){8-10}
$\delta\approx$
& $R_{\mathrm{resp}}$ & $R_{\mathrm{dec}}$ & $S_q$
& $R_{\mathrm{resp}}$ & $R_{\mathrm{dec}}$ & $S_q$
& $R_{\mathrm{resp}}$ & $R_{\mathrm{dec}}$ & $S_q$ \\
\midrule
$0.1$
& $1.30$ & $0.92$ & $+0.38$
& $1.21$ & $0.93$ & $+0.27$
& $2.59$ & $2.66$ & $-0.08$ \\
$0.01$
& $1.98$ & $1.36$ & $+0.62$
& $1.66$ & $1.09$ & $+0.58$
& $5.53$ & $6.55$ & $-1.02$ \\
$0.001$
& $3.32$ & $1.76$ & $+1.56$
& $2.74$ & $1.13$ & $+1.61$
& $9.58^{\dagger}$ & $9.97^{\dagger}$ & $-0.32^{\dagger}$ \\
\bottomrule
\end{tabular}
\par\vspace{2pt}
\begin{minipage}{\linewidth}\footnotesize
$\dagger$ \emph{Support (response/decision/paired):} $2/3/2$. The remaining trained frontiers do not attain this tolerance.
\end{minipage}
\end{table}

Some configurations near the tight-regret end of the frontier reach the fixed message-length cap. Rates in this regime can therefore also reflect finite representation capacity and optimization. More generally, the reported rates characterize the learned encoder and coding procedure used in the experiment.

\vspace{-2mm}
\section{Double Pendulum: Learned Dynamics and Decision Requirements}
\label{app:dp}
\vspace{-2mm}

This study examines how the information required for planning changes with decision resolution and planning horizon while keeping the learned physical representation fixed.

\vspace{-1mm}
\subsection{Physical System, Observations, and Planning Query}
\label{app:dp-wm}
\vspace{-1mm}

We consider a planar double pendulum with point masses and massless rigid rods. The hidden mechanism is
$\theta=(r_m,l_1,r_l)$, where $r_m=m_2/m_1$ is the mass ratio and $r_l=l_2/l_1$ is the link-length ratio. We fix $m_1=1$, set $m_2=r_m m_1$ and $l_2=r_l l_1$, and use $g=9.81$. Across mechanisms, $r_m$ is sampled log-uniformly from $[0.5,2]$, $l_1$ uniformly from $[0.8,1.2]$, and $r_l$ log-uniformly from $[0.7,1.3]$.

Angles $\vartheta_1,\vartheta_2$ are measured from the downward vertical, with angular velocities $\omega_i=\dot\vartheta_i$. Letting $d=\vartheta_1-\vartheta_2$, the kinetic and potential energies are
\begin{align}
E_{\mathrm{kin}} &= \tfrac12(m_1+m_2)l_1^2\omega_1^2+\tfrac12m_2l_2^2\omega_2^2+m_2l_1l_2\cos(d)\omega_1\omega_2,\\
E_{\mathrm{pot}} &= (m_1+m_2)gl_1(1-\cos\vartheta_1)+m_2gl_2(1-\cos\vartheta_2).
\end{align}
The Euler--Lagrange equations give
\begin{equation}
\begin{bmatrix}
(m_1+m_2)l_1 & m_2l_2\cos d\\
l_1\cos d & l_2
\end{bmatrix}
\begin{bmatrix}
\dot\omega_1\\
\dot\omega_2
\end{bmatrix}
=
\begin{bmatrix}
-m_2l_2\sin(d)\omega_2^2-(m_1+m_2)g\sin\vartheta_1\\
l_1\sin(d)\omega_1^2-g\sin\vartheta_2
\end{bmatrix}.
\end{equation}
The matrix determinant is $l_1l_2(m_1+m_2\sin^2d)>0$. We measure the height of the second mass as
\begin{equation}
h_2=l_1(1-\cos\vartheta_1)+l_2(1-\cos\vartheta_2).
\end{equation}

Each mechanism is observed through two query-independent calibration trajectories, initialized at $(0.60,-0.35,0,0.50)$ and $(-0.45,0.70,0.70,-0.55)$. Each trajectory contains 32 equally spaced observations over $1.5$\,s. Concatenating the four state variables across both trajectories gives a 256-dimensional observation history $h_t$. The direct-history control uses these same observations, standardized with training statistics.

Planning starts from a new initial state. We independently sample $\vartheta_{1,0},\vartheta_{2,0}\sim\mathcal U[-1.2,1.2]$ and $\omega_{1,0}\sim\mathcal U[-1,1]$, while the planner chooses the second-joint initial velocity $u=\omega_{2,0}$. We use $10{,}000$ training, $2{,}000$ validation, and $4{,}000$ test mechanisms, with disjoint mechanisms across splits.

\textbf{Planning query and candidates.}
The query $q=(h^\star,T)$ specifies a target height $h^\star$ and a planning horizon $T\in\{1,2,4\}$\,s. For a candidate initial velocity $u=\omega_{2,0}$, the cost is
\begin{equation}
J_q(u;\theta)=\left[\frac{\max_{0\leq t\leq T}h_2(t;u,\theta)-h^\star}{s_h}\right]^2.
\end{equation}
We fix the target range and height scale using a separate design pool of 4,096 mechanisms, disjoint from the training, validation, and test splits. Pooling the maximum heights reached over $4$\,s by all 64 candidate actions, we set the target interval to the 20th--80th percentiles of these responses,
$h^\star\in[0.3617,1.0740]$, and sample targets uniformly from this interval. We set $s_h=0.4244$ using $1.4826$ times the median absolute deviation of the same response distribution, providing a robust scale for the height error. The same $s_h$ is used for all planning horizons.

The full candidate set contains 64 equally spaced velocities on $[-3,3]$ rad/s. To vary decision resolution while keeping the action range fixed, we use nested subsets with $K\in\{2,8,64\}$. The eight-candidate set uses zero-based indices $0,9,18,27,36,45,54,63$, while the two-candidate set uses indices $18$ and $45$, corresponding to approximately $\pm1.286$ rad/s. These candidate sets are fixed across worlds.

To isolate the effect of decision resolution, response supervision always predicts costs for all 64 actions, whereas decision supervision and evaluation operate on the selected $K$-candidate subset. Thus changing $K$ alters the decision being made without changing the dimensionality of the response target.

We use one common regret scale across all resolution and horizon comparisons: approximately $6.760$, defined as the 95th percentile of the candidate-cost range over the 64-action bank on training contexts, pooled across $T\in\{1,2,4\}$\,s. This places all Double Pendulum regret tolerances $\delta$ on the same scale.

\vspace{-1mm}
\subsection{Learning and Validating the Physical Representation}
\label{app:dp-representation}
\vspace{-1mm}

The physical encoder represents each mechanism from its two calibration trajectories. The 32 observations from each trajectory form 64 tokens, each containing standardized state coordinates, observation time, and trajectory identity. Four pre-LN Transformer blocks (width 128, four heads, FFN width 512, GELU) aggregate these tokens through a learned summary token into a 256-dimensional predictive state $z_t$. The encoder receives only the calibration trajectories and does not observe the planning query, candidate resolution, target height, or mechanism parameters.

A causal Transformer with the same block architecture conditions on $z_t$ and the preceding states to predict the next standardized state increment. The encoder and dynamics model contain approximately 0.829M and 0.853M parameters, respectively. To train the dynamics predictor, we generate auxiliary rollouts from additional initial conditions under the same mechanism splits: eight per training mechanism and four per validation/test mechanism, giving $80{,}000/8{,}000/16{,}000$ rollouts. Each rollout contains 200 transitions at 0.02\,s intervals. These rollouts provide dynamics supervision, the downstream compression study still treats each mechanism as a single planning context.

Training minimizes mean squared error on standardized state increments under teacher forcing. We use AdamW with learning rate $3\times10^{-4}$, weight decay $10^{-4}$, batch size 256, gradient clipping at one, a 1,000-step warmup, and cosine decay. Model selection uses autoregressive rollout error: every 1,000 steps, we evaluate RMSE at horizons of $1$, $2$, and $4$\,s on 512 fixed validation rollouts and select the checkpoint with the lowest average RMSE. After 10,000 training steps, early stopping uses patience five, with a maximum of 30,000 steps.

\begin{table}[h]
\centering
\small
\caption{Temporal-model training and autoregressive-rollout fidelity. RMSE is computed on standardized states, height MAE at $T=4$\,s, and planning columns report winner accuracy / normalized regret.}
\label{tab:dp-wm-fidelity}
\vspace{1mm}
\resizebox{\linewidth}{!}{%
\begin{tabular}{@{}lccccc@{}}
\toprule
\textbf{Training seed}
& \shortstack{\textbf{Train end / best ckpt.}\\\textbf{(k steps)}}
& \textbf{RMSE (1 / 2 / 4 s)}
& \textbf{Height MAE}
& \shortstack{\textbf{$K=8$}\\\textbf{acc. / regret}}
& \shortstack{\textbf{$K=64$}\\\textbf{acc. / regret}} \\
\midrule
0 & 30 / 30 & 0.019 / 0.040 / 0.188 & 0.0082 & 0.908 / 0.0003 & 0.574 / 0.0007 \\
1 & 13 / 8  & 0.055 / 0.116 / 0.363 & 0.0206 & 0.798 / 0.0019 & 0.398 / 0.0025 \\
2 & 11 / 6  & 0.072 / 0.125 / 0.465 & 0.0314 & 0.727 / 0.0032 & 0.355 / 0.0042 \\
3 & 30 / 28 & 0.020 / 0.040 / 0.231 & 0.0084 & 0.912 / 0.0003 & 0.575 / 0.0007 \\
4 & 11 / 6  & 0.059 / 0.114 / 0.407 & 0.0252 & 0.772 / 0.0026 & 0.369 / 0.0038 \\
\midrule
Mean-mechanism baseline
& --- & 0.436 / 0.732 / 1.082 & 0.0845 & 0.576 / 0.0111 & 0.268 / 0.0141 \\
\bottomrule
\end{tabular}%
}
\end{table}

The mean-mechanism baseline simulates the arithmetic mean of the training mechanisms $(r_m,l_1,r_l)$. For the primary fidelity and planning evaluations, predicted angles are converted to heights using the true link lengths, which isolates errors in learned dynamics from geometry estimation. Under this evaluation, the learned model substantially outperforms the mean-mechanism baseline across rollout horizons and planning resolutions (Tab.~\ref{tab:dp-wm-fidelity}). As a geometry-agnostic control, we instead convert predicted angles using the mean-mechanism link lengths for every model. At $K=8$, the learned model achieves median normalized regret $0.0095$ across training seeds, compared with $0.0226$ for the mean-mechanism baseline. Before compression, each training seed is also required to achieve both four-second height MAE and eight-candidate planning regret below 75\% of the corresponding baseline values, with stable validation rollouts. All five training seeds satisfy this fidelity criterion.

\vspace{-1mm}
\subsection{Compression, Common Scoring, and Rate Estimation}
\label{app:dp-controls}
\vspace{-1mm}

We freeze the learned physical representation and apply the same discrete compression and common-scoring framework as in App.~\ref{app:collision-model}--\ref{app:collision-compression}. For each compressed representation, a fresh common scorer is trained from simulator costs using only the decoded message, known planning initial conditions, query, and candidate velocity. This evaluates the information retained after compression separately from the rollout fidelity of the original dynamics model in App.~\ref{app:dp-representation}.

The main experiments measure the decision-supervised required rate as either candidate resolution $K$ or planning horizon $T$ changes. In the resolution sweep, the planner selects among $K\in\{2,8,64\}$ candidates, while the response target remains the same 64-action cost vector. In the horizon sweep, $T\in\{1,2,4\}$\,s varies with $K=8$. All conditions use the same frozen physical representation, so changes in required rate reflect the planning condition rather than changes in the underlying physical evidence.

Compression follows the Collision setup, with 12,000 task-only pretraining steps and 13 tolerance levels spanning $\rho=0.0005$ to $0.95$. Decision training uses unit softmax temperature and no entropy bonus. We train five seeds for each condition.

Required rates are read from the per-seed rate--regret frontiers described in App.~\ref{app:collision-frontiers}. When a target regret tolerance lies between two measured frontier points, we linearly interpolate between their coded rates. Let $R_{\mathrm{lo}}$ and $R_{\mathrm{hi}}$ denote these neighboring rates. To avoid estimating a required rate across a large unsampled region of the frontier, interpolation is omitted when both
\begin{equation}
R_{\mathrm{hi}}-R_{\mathrm{lo}}>5,
\qquad
\frac{R_{\mathrm{hi}}+0.1}{R_{\mathrm{lo}}+0.1}>3.
\end{equation}
A seed supports a target tolerance only when that tolerance is reached without crossing such a gap. For comparisons across $K$ or $T$, we use seeds supported by all three conditions and compute rate changes within seed before taking the median. At least three matched seeds are required.

The frontier also includes a zero-rate baseline that sends no learned message and always selects a fixed action chosen from the training data. Consequently, rates below one bit per context represent average coded lengths and can also arise by interpolation toward this zero-rate endpoint.

\vspace{-1mm}
\subsection{Required Rates and Input Controls}
\label{app:dp-rates}
\vspace{-1mm}

We evaluate how the required coded rate changes with candidate resolution and planning horizon. The \emph{Learned} condition uses the predictive representation from App.~\ref{app:dp-representation}. We additionally evaluate two input controls through the same downstream compression and scoring pipeline. The \emph{Known} control directly provides the standardized mechanism parameters $(\log r_m,l_1,\log r_l)$, while the \emph{History} control provides the standardized 256-dimensional calibration history without temporal-model training.

\begin{table}[h]
\centering
\small
\caption{Required coded rates as candidate resolution and planning horizon vary. Values are median required rates over matched supported seeds.}
\label{tab:dp-wm-resolution}
\vspace{1mm}
\resizebox{\linewidth}{!}{%
\begin{tabular}{@{}lccc@{}}
\toprule
\textbf{Input}
& $\boldsymbol{\delta}$
& \textbf{Resolution at $T=4$\,s: $R$ ($K=2/8/64$)}
& \textbf{Horizon at $K=8$: $R$ ($T=1/2/4$\,s)} \\
\midrule
Learned
& 0.05
& 0.000 / 0.048 / 0.059
& 0.022 / 0.029 / 0.048 \\

Learned
& 0.02
& 0.016 / 0.080 / 0.782
& 0.039 / 0.046 / 0.080 \\
\midrule
Known
& 0.05
& 0.000 / 0.024 / 0.028
& 0.026 / 0.038 / 0.024 \\

Known
& 0.02
& 0.059 / 0.055 / 0.656
& 0.047 / 0.060 / 0.061 \\

History
& 0.05
& 0.000 / 0.034 / 0.025
& 0.029 / 0.030 / 0.034 \\

History
& 0.02
& 0.022 / 0.093 / 0.929
& 0.052 / 0.105 / 0.093 \\
\bottomrule
\end{tabular}%
}
\par\vspace{2pt}
\end{table}

For the learned representation, the required rate increases consistently with both candidate resolution and planning horizon at $\delta=0.05$ and $0.02$. The paired within-seed changes have the same positive direction for both adjacent resolution and horizon comparisons. Finer action choices and longer prediction horizons therefore require more information from the learned physical representation. The Known and History rows provide corresponding input controls under the same downstream rate-estimation procedure. At $\delta=0.01$, fewer than three matched seeds reach the target tolerance across all three conditions, so we report the two tolerances with sufficient frontier support.

\textbf{Numerical accuracy.}
Simulator trajectories are generated with the classical fourth-order Runge--Kutta method (RK4) using a $0.005$\,s integration step, while the learned dynamics model operates at $0.02$\,s intervals. To verify that numerical integration error is negligible, we compare RK4 trajectories against DOP853, a higher-order adaptive Runge--Kutta solver, on 512 random mechanism--state--action combinations. The maximum height-response error is $1.98\times10^{-6}$. Sampling the RK4 trajectories on the world model's $0.02$\,s grid introduces a four-second height MAE of approximately $1.2\times10^{-4}$ relative to the finer RK4 trajectory, well below the learned-model prediction errors reported in App.~\ref{app:dp-representation}.

\vspace{-1mm}
\subsection{Autoregressive Rollout Fidelity}
\vspace{-1mm}

Fig.~\ref{fig:dp-wm-fidelity-visual} evaluates the learned dynamics model under autoregressive rollout, where each predicted state is fed back to predict the next state. Across all five training seeds, we report state RMSE as the rollout horizon increases, maximum-height prediction error, and the resulting planning regret. Fig.~\ref{fig:dp-rollout-visuals} complements these aggregate results with three predeclared test rollouts, comparing the learned dynamics with the true simulator and the mean-mechanism baseline.

\begin{figure}[h]
\centering
\includegraphics[width=\linewidth]{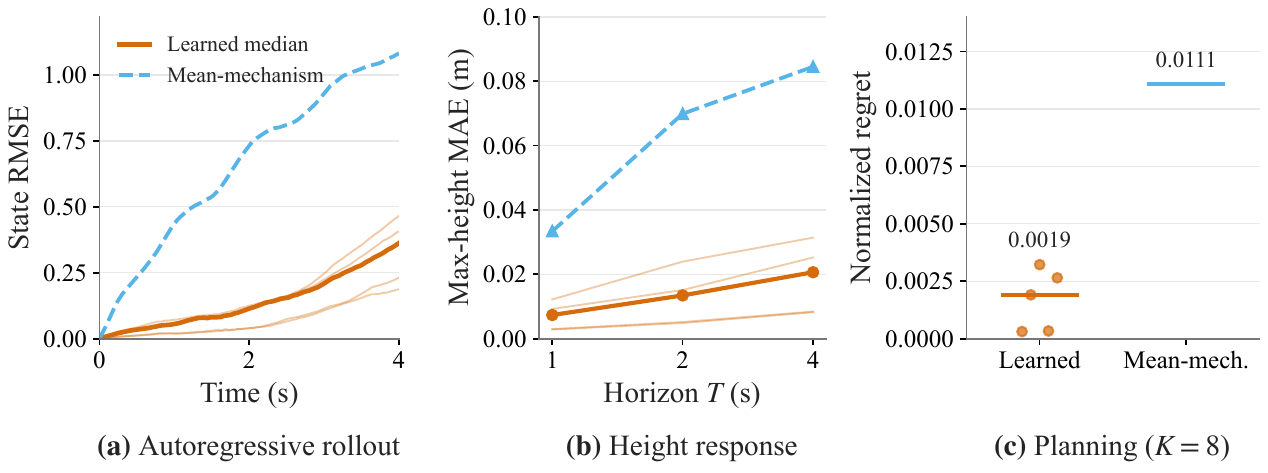}
\caption{\textbf{Autoregressive rollout fidelity.} State RMSE over the rollout horizon, maximum-height MAE, and planning regret for individual training seeds (thin orange), their medians (thick orange), and the mean-mechanism baseline (blue).}
\label{fig:dp-wm-fidelity-visual}
\end{figure}

\begin{figure}[h]
\centering
\includegraphics[width=.86\linewidth]{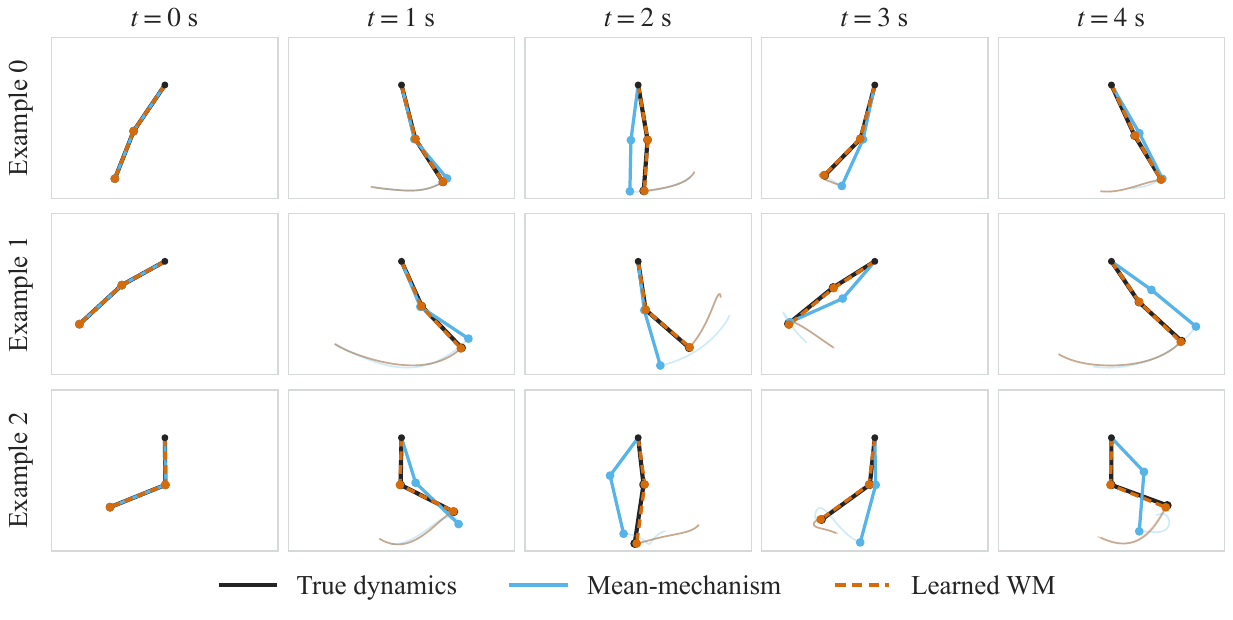}\par
\vspace{1pt}
{\footnotesize\textbf{(a)} Configuration snapshots\par}
\vspace{4pt}
\includegraphics[width=.86\linewidth]{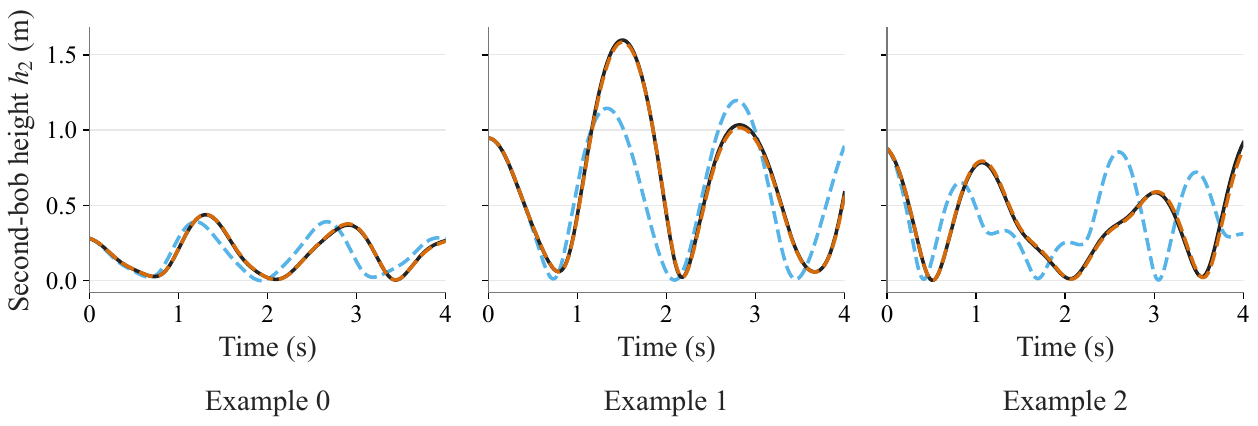}\par
\vspace{1pt}
{\footnotesize\textbf{(b)} Corresponding height trajectories\par}
\caption{\textbf{Example autoregressive rollouts.} Configuration snapshots and height trajectories for three predeclared test cases. We compare the true dynamics (black), learned model (orange dashed), and mean-mechanism baseline (blue). Faint trails show the preceding $0.6$\,s of motion.}
\label{fig:dp-rollout-visuals}
\end{figure}

\vspace{-2mm}
\section{Push-T: Candidate Sets and Planning Stages}
\label{app:pusht-search}
\vspace{-2mm}

\subsection{Visual Prediction and Query Costs}
\vspace{-1mm}

We study planning in Push-T~\citep{chi2025diffusion} using RGB observations and pusher proprioception from the DINO-WM dataset~\citep{zhou2024dino}. A frozen DINOv2 encoder~\citep{oquab2023dinov2} and pretrained JEPA-WM predictor~\citep{terver2025drives} predict future visual features for each candidate action sequence, and a frozen pose head decodes the predicted terminal block pose. Throughout this section, \emph{Full predictions} evaluate the planning query from this predicted pose, simulator states are used only for retrospective evaluation.

Both queries use the normalized terminal pose error
\begin{equation}
d_{\mathrm{pose}}
=
\frac{\|p_T-p_{\mathrm{goal}}\|_2}{s_{\mathrm{pos}}}
+
\frac{\operatorname{angdist}(\vartheta_T,\vartheta_{\mathrm{goal}})}{s_{\mathrm{yaw}}},
\end{equation}
where $\operatorname{angdist}$ is the shortest angular distance, $s_{\mathrm{pos}}=131.3$ pixels, and $s_{\mathrm{yaw}}=1.409$ rad. The fine query uses $d_{\mathrm{pose}}$ directly. The coarse query uses
$\max(0,d_{\mathrm{pose}}-0.1073)$, where $0.1073$ is the 35th percentile of training candidate distances. Selection regret is the simulator cost of the selected candidate minus the minimum simulator cost within the same candidate set, normalized by training-derived scales of $1.099$ for the fine query and $1.179$ for the coarse query.

\textbf{Frozen visual predictor.}
We use the released Push-T JEPA-WM~\citep{terver2025drives}, following its released preprocessing. The predictor receives two context frames, pusher proprioception, and candidate actions grouped into five-step blocks, and outputs future DINOv2 features.

\textbf{Pose readout.}
A separate pose head maps predicted visual features to normalized block position and sine/cosine orientation. It uses two width-256 Transformer blocks (four heads, FFN width 512) followed by an MLP. The head is trained on 240,000 frames with equally weighted position and orientation losses and validated on 4,096 frames from disjoint trajectories. AdamW uses learning rate $3\times10^{-4}$, weight decay $10^{-4}$, batch size 256, and 8,000 updates. The checkpoint with the lowest validation pose error is retained. Both the predictor and pose head remain frozen in Secs.~\ref{sec:planning} and~\ref{sec:selective_learning}.

\vspace{-1mm}
\subsection{Messages, Data, and Training}
\vspace{-1mm}

We compress each candidate's predicted terminal features into a small discrete message. The one-bit \emph{Decision} message is trained to support low-regret candidate selection, while the two- and four-bit \emph{Cost} messages are trained to preserve normalized candidate costs. After training each representation, we freeze its codes and train a fresh common scorer from the decoded message and public query information. This matched readout isolates the information retained by the message from differences between the original task-specific heads. Message sizes here are nominal bits per candidate, given by the base-two logarithm of the discrete message alphabet size.

The training bank contains 1,200 states from 1,167 trajectories, with three 16-candidate sets per state. Splits are made by trajectory, with 120 states reserved for validation and 120 for reporting, the remainder are used for training. For each candidate, the encoder reads 256 predicted visual tokens and 16 proprioceptive features using four attention slots and one width-256, four-head attention block, conditioned on the query and goal. The one-, two-, and four-bit settings select a 32-dimensional codeword from codebooks of size $2$, $4$, and $16$, respectively. VQ uses EMA codebook updates with decay $0.99$ and commitment weight $0.25$.

The common scorer uses two width-256 SiLU hidden layers. Both representation training and scorer training use AdamW with learning rate $3\times10^{-4}$, weight decay $10^{-4}$, gradient clipping at one, and cosine decay. Training budgets are summarized in Tab.~\ref{tab:pusht-message-training}. All learned modules are frozen during planning.

\begin{table}[h]
\centering
\small
\caption{Training budgets for Push-T message representations and their common scorer. Batch sizes count candidate sets.}
\label{tab:pusht-message-training}
\vspace{1mm}
\setlength{\tabcolsep}{4pt}
\begin{tabular}{@{}P{.24\linewidth}rrP{.48\linewidth}@{}}
\toprule
\textbf{Stage} & \textbf{Steps} & \textbf{Batch} & \textbf{Checkpoint rule} \\
\midrule
Encoder and native head
& 6,000 & 64
& Lowest native validation loss after a 300-step warmup. \\
\midrule
Common scorer
& 4,000 & 128
& Lowest validation decision loss after a 200-step warmup. \\
\bottomrule
\end{tabular}
\end{table}

\vspace{-1mm}
\subsection{CEM and Retrospective Evaluation}
\label{app:pusht-search-protocol}
\vspace{-1mm}

CEM~\citep{rubinstein1999cross} runs for 30 iterations with 300 candidates per iteration and ten elites, refitting a diagonal Gaussian after each update. Each candidate plan contains six world-model action steps, with five two-dimensional environment actions per step, for 30 physical actions in total. The initial normalized action distribution has zero mean and unit standard deviation.

Each CEM iteration updates the search distribution at a fixed physical state, it is not an environment interaction. In the search-versus-selection experiment, the final 300-candidate population produced by one search method can be evaluated by either selector. In the timing experiment, the planner executes the mean action sequence of the final CEM distribution. Simulator costs never guide search or selection and are used only to evaluate the resulting candidates and actions.

\vspace{-1mm}
\subsection{Dependence on the Candidate Set}
\vspace{-1mm}

To isolate the effect of the candidate set, we reuse candidate populations generated once by CEM with Full-prediction scores. For each of 128 development episodes, we save seven 300-candidate populations spanning early to late search, yielding 896 candidate sets. Every message representation is evaluated on exactly the same saved candidates.

We measure candidate-set breadth by the mean pairwise distance between simulator terminal poses, using the same position and orientation scales as the query. The regret gap between Decision and Full increases from $0.019$ on the narrowest candidate sets to $0.115$ on the broadest, showing that the information sufficient for selection depends on the alternatives being compared.

\vspace{-1mm}
\subsection{Search versus Selection and Information Timing}
\vspace{-1mm}

\textbf{Search versus selection.}
We separate the information used during CEM search from that used for final selection by crossing the searcher and selector on the same final candidate population. Under the fine query, we evaluate 64 matched development episodes using Full predictions or the one-bit Decision message during search, and then apply either selector to the resulting final 300 candidates. The mean selected-action costs for Full/Full, Full/Decision, Decision/Full, and Decision/Decision are $0.180$, $0.179$, $0.691$, and $0.732$, respectively. Replacing Full with the one-bit message only at final selection has almost no effect, whereas using the one-bit message during search substantially degrades the candidate set even when Full predictions are restored for the final choice.

\textbf{Information timing.} The timing experiment uses the same 64 matched development episodes and CEM protocol as above. The early- and late-rich schedules use identical numbers of four-bit Cost and one-bit Decision updates, differing only in their order; the two-bit Cost schedule provides a constant-information baseline with the same nominal message budget. This isolates when richer information enters the search while holding the total message budget fixed.

\vspace{-1mm}
\subsection{Cost-Message Capacity across Candidate Sets}
\vspace{-1mm}

We additionally test whether the candidate-set breadth effect persists beyond the one-bit Decision representation. Fig.~\ref{fig:planning_support_appendix} evaluates Full predictions, Decision (1 bit), and Cost messages of increasing capacity on the same 896 saved candidate sets. Selection regret decreases as the Cost message retains more information, while broader candidate sets remain more demanding across message capacities.

\begin{figure}[h]
\centering
\includegraphics[width=\linewidth]{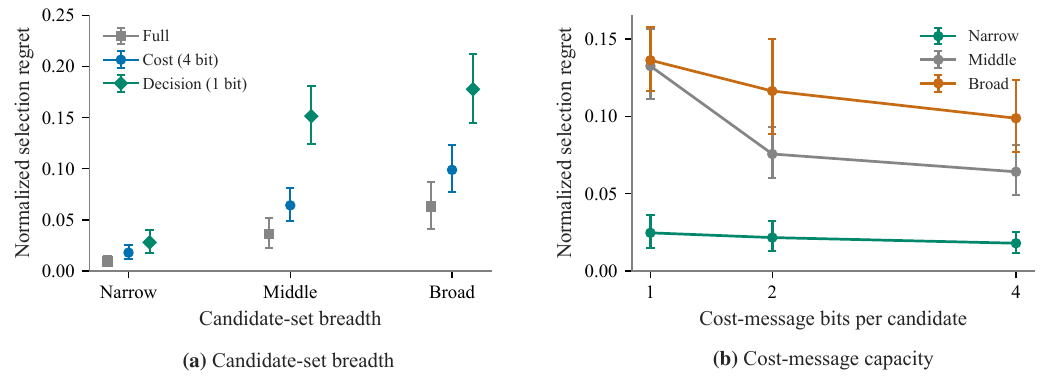}
\small
\caption{\textbf{Selection difficulty depends on the supplied candidate set.}
\textbf{(a)} Full predictions, Decision (1 bit), and Cost (4 bit) evaluated on identical candidate sets grouped by breadth. \textbf{(b)} Selection regret for Cost messages as message capacity increases.}
\label{fig:planning_support_appendix}
\end{figure}

\vspace{-2mm}
\section{PushCube: Query-Guided Candidate Proposals}
\label{app:query-search}
\vspace{-2mm}

\subsection{Environment, Observations, and Physical Data}
\vspace{-1mm}

This section details the PushCube experiment in Sec.~\ref{sec:query_search}. We use ManiSkill PushCube-v1~\citep{tao2024maniskill3} with a Panda arm under end-effector delta-position control. Each observation contains a $128\times128$ RGB image from the base camera and 16 proprioceptive features. The native goal marker is hidden from the image, the target displacement is supplied only through the planning query. We construct a query-independent physical bank with 512 training scenes, 64 validation scenes, and 192 exploratory scenes, each containing 64 executed action sequences. Queries are applied to these stored trajectories offline, so different targets reuse the same underlying physical data. The action bank combines pushes in different directions and distances with smooth perturbations to cover diverse cube motions.

Each plan contains ten macro-actions over $(\Delta x,\Delta y,\Delta z,\mathrm{gripper})$, with each macro-action repeated for four controller steps. The plans approach the cube, descend, and execute a push with the gripper closed.

\vspace{-1mm}
\subsection{Shared Predictor and Matched Proposals}
\vspace{-1mm}

A frozen DINOv2 ViT-S/14~\citep{oquab2023dinov2} encodes the RGB observation. The shared action-conditioned physical predictor combines these visual features with proprioception and a candidate action sequence, and predicts the cube and end-effector trajectory together with terminal visual features. The planning query is never provided to this predictor.

The query-aware and query-blind proposal networks use the same architecture: an eight-component diagonal Gaussian mixture over the full $10\times4$ action sequence. The query-aware proposal receives the target displacement, while the query-blind proposal receives a learned constant embedding of the same dimension, giving the two proposals matched capacity. Both are trained on the same scene--action--query tuples with cost-weighted likelihood,
\begin{equation}
w_j \propto \exp[-J_q(A_j)/\tau],
\end{equation}
where $J_q(A_j)$ evaluates the stored physical outcome under the query defined below. The temperature $\tau$ is calibrated once on the training set and then fixed for evaluation. Thus the comparison changes access to the query during candidate proposal while keeping the physical data, proposal capacity, and downstream predictor matched.

The physical predictor is trained for 12,000 updates and the two proposal networks for 8,000 updates. All models use AdamW with learning rate $3\times10^{-4}$ and batch size 128. Tab.~\ref{tab:pushcube-training} summarizes the training objectives.

\begin{table}[h]
\centering
\small
\caption{Training objectives and budgets for the PushCube proposal experiment.}
\label{tab:pushcube-training}
\vspace{1mm}
\resizebox{\linewidth}{!}{%
\begin{tabular}{lP{.58\linewidth}r}
\toprule
\textbf{Component} & \textbf{Training objective} & \textbf{Steps} \\
\midrule
Physical predictor
& Terminal-feature prediction and physical-trajectory prediction
& 12,000 \\
\midrule
Query-aware / query-blind proposals
& Cost-weighted mixture negative log likelihood
& 8,000 \\
\bottomrule
\end{tabular}%
}
\end{table}

\vspace{-1mm}
\subsection{Candidate Budgets and Execution}
\vspace{-1mm}

A query specifies a target displacement $q$ at a radius of 15\,cm from the cube's initial position. For predicted terminal displacement $\Delta p_T$, the planning cost is
\begin{equation}
\ell_q(Y,A)=\min\left(1,\frac{\|\Delta p_T-q\|_2}{0.20}\right).
\end{equation}
Planning evaluates this cost from the shared predictor, while evaluation applies the same cost to the executed displacement.

Both proposal networks use the same action-conditioned predictor to evaluate their candidates. We compare candidate budgets
$N\in\{1,4,8,16,32,64,128\}$ using nested prefixes of the same 128-sample sequence and matched proposal randomness across the two methods. The planner selects one candidate and executes its ten macro-actions open loop. Success requires the cube to finish within 3.5\,cm of the target while remaining on the table.

The primary evaluation uses 512 fresh scenes, four target directions, and three independently trained proposal pairs. A separate calibration pool fixes the comparison criteria before evaluation. With only $N=8$ candidates, the query-aware proposal achieves lower mean terminal cost and higher success than the query-blind proposal with $N=128$, while satisfying the pre-specified comparison criteria across all three training seeds. This corresponds to a $16\times$ reduction in candidate evaluations.

\vspace{-1mm}
\subsection{Candidate Quality and Selection Error}
\vspace{-1mm}

Fig.~\ref{fig:planning_query_support_diagnostics} separates the quality of the proposed candidate set from the accuracy of the learned selector. We evaluate both effects on 192 exploratory scenes. In panel (a), query-aware proposals concentrate realized cube motions more strongly around the target direction, while query-blind proposals remain more dispersed. This shows directly how access to the query focuses candidate generation.

Panel (b) tracks both the best available candidate and the candidate chosen by the learned selector as the budget increases. For the query-aware proposal, the best simulator-scored candidate continues to improve with larger $N$, indicating that additional samples uncover increasingly good actions. The selected candidate does not always improve at the same rate, showing that a better candidate set does not by itself guarantee better final selection. Simulator costs are used only for this retrospective evaluation.

Panel (c) illustrates this distinction in a single exploratory scene at $N=128$. The query-aware proposal contains a low-cost candidate, but the learned selector fails to choose it and instead selects a worse action. This example separates two sources of planning error: failing to propose a good action and failing to identify a good action once it is available.

\begin{figure}[h]
\centering
\includegraphics[width=\linewidth]{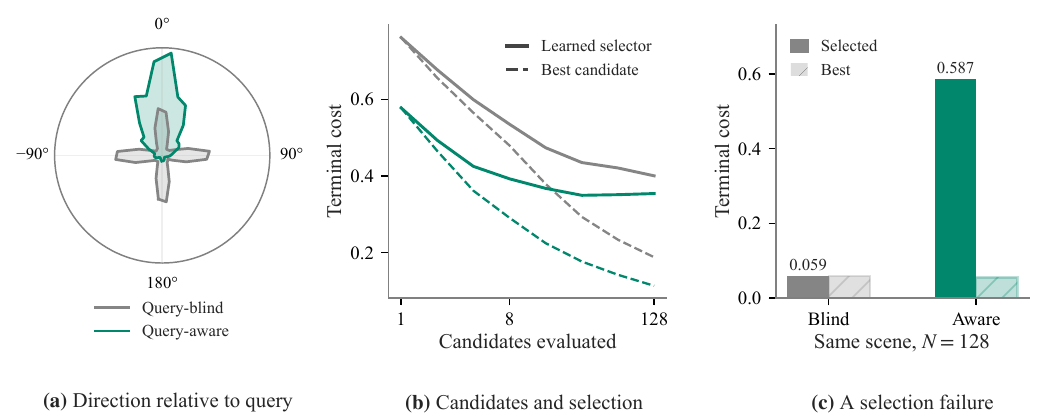}
\small
\caption{\textbf{Candidate quality and selection quality.}
\textbf{(a)} Realized candidate directions.
\textbf{(b)} Costs of the learned-selected candidate (solid) and the best candidate in each set (dashed) across candidate budgets.
\textbf{(c)} An example selection failure at $N=128$.
Gray denotes query-blind proposals and green denotes query-aware proposals.}
\label{fig:planning_query_support_diagnostics}
\end{figure}

\vspace{-2mm}
\section{Learning Planner-Relevant Representations}
\label{app:pusht-supervision}
\vspace{-2mm}

Using the frozen predictor, pose readout, and fine-query cost from App.~\ref{app:pusht-search}, we study which training signals best preserve planner-relevant information under a fixed discrete bottleneck. \emph{Value} learns to reproduce world-model costs, \emph{Planner} learns the relative distinctions used by CEM, and \emph{Hybrid} combines both objectives. All three methods use the same inputs, representation capacity, and downstream planner.

\vspace{-1mm}
\subsection{Data and Discrete Bottleneck}
\label{app:pusht-supervision-data}
\vspace{-1mm}

The training bank contains 240 contexts, with eight 64-candidate CEM populations saved from early to late search for each context. A separate 32-context pool is used for development, and confirmation uses 128 additional trajectory windows excluded from compact-model training.

For each candidate, the encoder combines predicted visual features, proprioception, and the candidate action sequence. Two permutation-equivariant Transformer blocks then process the full candidate set, allowing the representation to capture distinctions that depend on the alternatives under consideration. Each candidate is mapped to a 32-dimensional discrete code. The one-bit setting uses a two-entry codebook, while the eight-bit setting uses two 16-entry codebooks through product VQ.

All task heads receive only the decoded message together with the known query and goal. Candidate actions and continuous encoder features cannot bypass the discrete bottleneck. After representation training, we freeze the codes and train the same common readout for every method, providing a matched comparison of the information retained by each representation.

\begin{table}[h]
\centering
\small
\caption{Training budgets for Push-T supervision experiments.}
\label{tab:pusht-supervision-training}
\vspace{1mm}
\begin{tabular}{lrrl}
\toprule
\textbf{Stage} & \textbf{Steps} & \textbf{Batch} & \textbf{Training setup} \\
\midrule
Encoder and native head
& 6,000 & 64 sets
& End-to-end representation training \\
\midrule
Common readout
& 4,000 & 64 sets
& Frozen discrete codes \\
\bottomrule
\end{tabular}%
\end{table}

\vspace{-1mm}
\subsection{Value, Planner, and Hybrid Supervision}
\vspace{-1mm}

Let $c_i=\widehat J_q(A_i)/s_q$ denote the normalized cost predicted by the frozen full world model, and let $r_i\in\{1,\ldots,64\}$ be the corresponding rank within the candidate set. Value supervision preserves absolute candidate costs through
\begin{equation}
\mathcal L_{\mathrm{Value}}
=
\frac{1}{64}\sum_i
\operatorname{Huber}(\widehat c_i-c_i).
\end{equation}

Planner supervision instead emphasizes the distinctions directly used by CEM. It encourages the score $s_i$ to reflect candidate rank and separates the eight elite candidates from nearby non-elites,
\begin{align}
\mathcal L_{\mathrm{Planner}}
&=
\frac{1}{64}\sum_i
\operatorname{Huber}\!\left(
\sigma(s_i)-1+\frac{r_i-1}{63}
\right)
\nonumber\\
&\quad+
\mathbb E_{i\in E,\;j\in B}
\operatorname{softplus}(1-s_i+s_j),
\end{align}
where $E$ contains the top eight candidates and $B$ contains candidates ranked 9--24. The first term preserves the overall ordering, while the second focuses representation capacity near the CEM elite boundary. Hybrid supervision combines the two,
\begin{equation}
\mathcal L_{\mathrm{Hybrid}}
=
\mathcal L_{\mathrm{Value}}
+
\lambda\mathcal L_{\mathrm{Planner}}.
\end{equation}
We set $\lambda$ by balancing encoder gradient magnitudes on development data, yielding $\lambda=0.260$ at one bit and $\lambda=0.209$ at eight bits. These values are fixed before confirmation. All objectives are constructed from frozen world-model predictions. Simulator outcomes are used only for evaluation.

\vspace{-1mm}
\subsection{Low-Capacity Confirmation and Capacity Control}
\vspace{-1mm}

Confirmation uses 128 unseen contexts and three training seeds. At one bit, we compare Value, Planner, and Hybrid supervision. At eight bits, we compare Value and Hybrid as a higher-capacity control. CEM runs for 30 iterations with 64 candidates and eight elites.

We evaluate each representation in two ways. \emph{Elite agreement} measures whether the compressed representation identifies the same promising candidates as the full predictor, using the Jaccard overlap between their eight-candidate elite sets. A common readout is trained on each frozen representation for this comparison. \emph{Planning regret} evaluates the action ultimately produced by CEM relative to the best simulator-evaluated candidate visited by Full CEM on the same context.

\begin{table}[h]
\centering
\small
\caption{Push-T confirmation and capacity control. Lower regret and higher Jaccard are better.}
\label{tab:pusht-key-controls}
\vspace{1mm}
\begin{tabular}{cccccccccc}
\toprule
\multicolumn{6}{c}{\textbf{1 bit}} &
\multicolumn{4}{c}{\textbf{8 bit}} \\
\cmidrule(lr){1-6}\cmidrule(lr){7-10}
\multicolumn{2}{c}{\textbf{Value}} &
\multicolumn{2}{c}{\textbf{Planner}} &
\multicolumn{2}{c}{\textbf{Hybrid}} &
\multicolumn{2}{c}{\textbf{Value}} &
\multicolumn{2}{c}{\textbf{Hybrid}} \\
\cmidrule(lr){1-2}\cmidrule(lr){3-4}\cmidrule(lr){5-6}
\cmidrule(lr){7-8}\cmidrule(lr){9-10}
Regret & Jaccard &
Regret & Jaccard &
Regret & Jaccard &
Regret & Jaccard &
Regret & Jaccard \\
\midrule
0.448 & 0.082 &
0.353 & 0.124 &
0.272 & 0.136 &
0.229 & 0.161 &
0.306 & 0.142 \\
\bottomrule
\end{tabular}%
\end{table}

At one bit, Planner improves over Value on both measures, while Hybrid performs best. Under this tight bottleneck, supervision aligned with the planner therefore preserves useful distinctions more effectively than cost prediction alone.

The comparison changes at eight bits. Value achieves lower planning regret and higher elite agreement than Hybrid, indicating that the low-capacity benefit of combining planner-aware and value supervision does not persist once substantially more information can pass through the representation.

\vspace{-1mm}
\subsection{Closed-Loop Evaluation and Robustness Controls}
\label{app:pusht-closed-loop}
\vspace{-1mm}

We next evaluate whether the supervision differences persist under closed-loop replanning. Each episode contains four replanning rounds over 60 environment steps. At each round, the planner predicts a 30-step action sequence, executes the first 15 steps, observes the resulting state, and replans. Models remain frozen and planner random streams are matched across methods.

At one bit, Hybrid reduces mean closed-loop cost relative to Value by $0.113$ across three training seeds. This agrees with the candidate-set and CEM results above, showing that the benefit of planner-aware supervision carries over when prediction and planning are repeatedly updated from new observations. At eight bits, a higher-capacity diagnostic instead gives a Hybrid--Value cost difference of $+0.062$. Together with the capacity control above, this indicates that the advantage is concentrated in the low-capacity regime.

We further test whether the one-bit result depends on the amount of training data or on a particular CEM configuration. Fig.~\ref{fig:pusht_learning_controls}(a,b) summarizes the development behavior in elite agreement and planning regret. Panel (c) increases the training set from 25\% to 100\%. Additional data improves elite agreement for Planner and Hybrid, while Value changes little, so the low-capacity difference is not removed simply by providing more training contexts. Panel (d) varies the CEM candidate count over $\{32,64,128\}$ and the elite fraction over $\{1/16,1/8,1/4\}$. Hybrid achieves lower mean regret than Value in all nine tested configurations, showing that the one-bit advantage is not tied to a single CEM population size or elite fraction.

\begin{figure}[h]
\centering
\includegraphics[width=\linewidth]{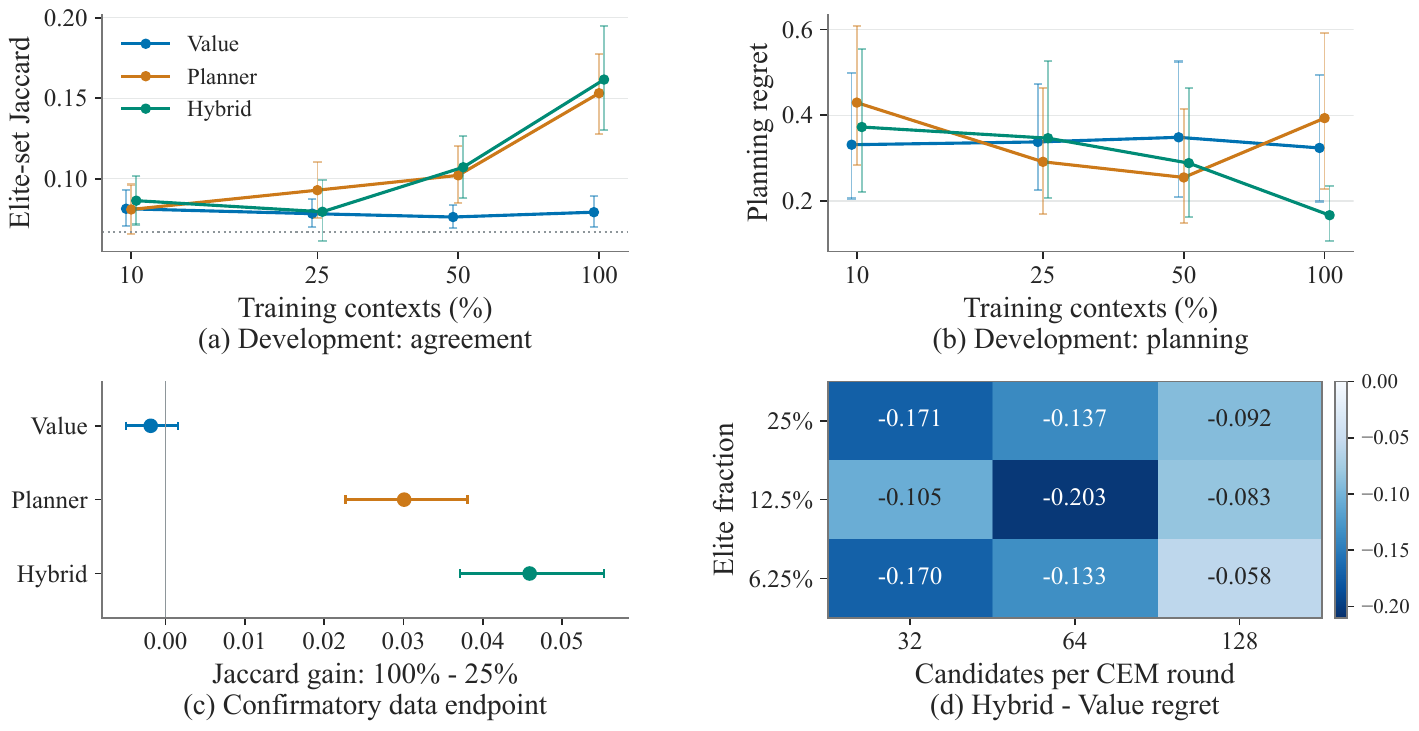}
\small
\caption{\textbf{Controls for supervision and planner settings.}
\textbf{(a,b)} Development elite agreement and planning regret.
\textbf{(c)} Effect of additional training data.
\textbf{(d)} Hybrid--Value regret across CEM candidate counts and elite fractions, where negative values favor Hybrid.}
\label{fig:pusht_learning_controls}
\end{figure}

Fig.~\ref{fig:pusht_rollout_examples} provides a qualitative view of the same capacity-dependent behavior. Panel (a) shows a one-bit episode in which Hybrid produces a lower-cost closed-loop trajectory than Value. Panel (b) shows an eight-bit episode with the opposite ordering, where Value performs better. These examples illustrate how the aggregate differences above appear in the executed trajectories rather than introducing an additional quantitative comparison.

\begin{figure}[h]
\centering
\includegraphics[width=\linewidth]{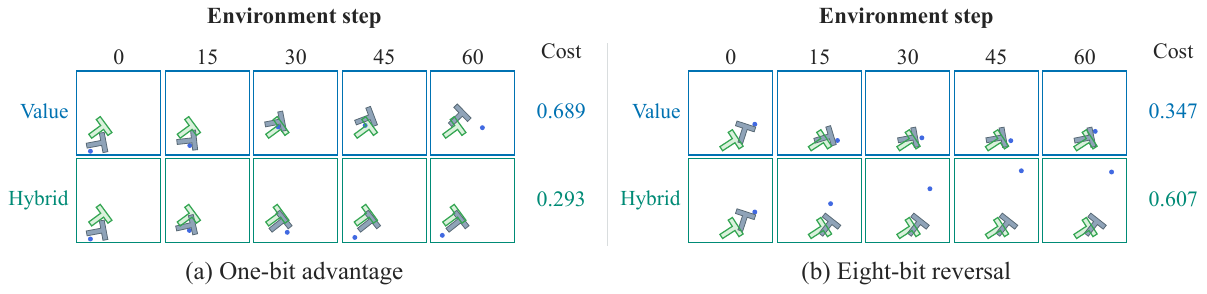}
\small
\caption{\textbf{Example closed-loop outcomes.}
Recorded trajectories showing \textbf{(a)} a one-bit Hybrid advantage and \textbf{(b)} an eight-bit Value advantage. The green outline marks the scoring target, the gray shape the block, and blue the pusher.}
\label{fig:pusht_rollout_examples}
\end{figure}

\vspace{-2mm}
\section{Query Placement across Objectives}
\label{app:oc-setting}
\vspace{-2mm}

This experiment studies where query information should enter when planning objectives change, complementing Sec.~\ref{sec:selectivity_reuse}. We use ManiSkill PushCube-v1~\citep{tao2024maniskill3} with frozen DINOv2 visual features~\citep{oquab2023dinov2}, proprioception, and ten macro-actions per plan. Objectives combine reaching with keep-out regions, speed limits, and effort constraints. These objectives change how a trajectory is evaluated without changing the underlying dynamics, so the same physical rollout can be reused across queries.

\vspace{-1mm}
\subsection{Objective Families and Physical Data}
\vspace{-1mm}

All objective families include reaching a target. Training uses five families: reach only, keep-out, speed, effort, and speed + effort. Held-out evaluation recombines these familiar objective terms into five unseen compositions: keep-out + speed, keep-out + effort, keep-out + speed + effort, two keep-out regions, and two keep-out regions + speed. Thus, the held-out setting tests generalization to new objective compositions rather than new constraint types or physical dynamics.

The physical data are shared across objectives. Each training scene contains a bank of query-independent executed plans covering straight, curved, and corrective pushes. Different queries are applied retrospectively to the same trajectories, so changing the objective does not require collecting new physical rollouts. The primary evaluation uses 256 held-out test scenes, three training seeds, and all ten objective families. Each planner proposes 32 candidates and selects one action sequence without iterative refinement.

The three primary interfaces differ in where query information enters the planning pipeline. ACWM uses a query-blind proposal and an action-conditioned physical predictor. Joint conditions both candidate generation and future prediction on the query. Modular combines a query-aware proposal with the same query-independent physical predictor used by ACWM.

\vspace{-1mm}
\subsection{Query Construction, Costs, and Success}
\label{app:oc-cost}
\vspace{-1mm}

A query specifies a target together with any active keep-out, speed, and effort constraints. Let $d$ denote terminal target distance, $v_{\max}$ the maximum cube speed along the trajectory, and $e_A$ the mean squared magnitude of the Cartesian actions. The combined planning cost is
\begin{equation}
\ell_q(Y,A)=
\frac{
c_{\mathrm{goal}}
+b_{\mathrm{keep}}c_{\mathrm{keep}}
+0.5b_{\mathrm{speed}}c_{\mathrm{speed}}
+0.5b_{\mathrm{effort}}c_{\mathrm{effort}}
}{
1+b_{\mathrm{keep}}+0.5b_{\mathrm{speed}}+0.5b_{\mathrm{effort}}
},
\end{equation}
where each $b$ indicates whether the corresponding objective is active. Writing $[x]_0^1=\min(1,\max(0,x))$, the component costs are
\begin{equation}
c_{\mathrm{goal}}
=
\left[\frac{d-\rho_q}{0.20\,\mathrm m}\right]_0^1,
\qquad
c_{\mathrm{speed}}
=
\left[\frac{v_{\max}-v_q}{s_v}\right]_0^1,
\qquad
c_{\mathrm{effort}}
=
\left[\frac{e_A-e_q}{s_e}\right]_0^1.
\end{equation}

For active keep-out regions with center $z_j$ and radius $r_j$, the penalty is
\begin{equation}
c_{\mathrm{keep}}
=
\max_j
\left[
1-\frac{\operatorname{dist}(z_j,\mathrm{path})}{r_j}
\right]_0^1.
\end{equation}
The path distance accounts for crossings between recorded macro-step positions.

Targets lie $0.14$--$0.19$\,m from the initial cube position, with goal tolerance $\rho_q\in[0.050,0.065]$\,m. Keep-out radii lie in $[0.025,0.055]$\,m. Speed and effort thresholds are calibrated from training trajectories and then fixed for evaluation, with $s_v=0.257$\,m/s and $s_e=0.169$.

A plan is successful only when it reaches the target tolerance and satisfies every active constraint, together with the table and workspace conditions. This provides a stricter complement to the continuous planning cost.

\vspace{-1mm}
\subsection{ACWM, Joint, and Modular Interfaces}
\label{app:oc-interfaces}
\vspace{-1mm}

All three interfaces begin from the same frozen visual features and proprioceptive state and use matched Gaussian-mixture proposal networks. Their main difference is where the query enters the system.

\begin{table}[h]
\centering
\small
\caption{Query placement and predictive training exposure for the three primary interfaces.}
\label{tab:oc-access}
\vspace{1mm}
\begin{tabular}{llll}
\toprule
\textbf{Interface} &
\textbf{Proposal} &
\textbf{Physical predictor} &
\textbf{Predictive exposure} \\
\midrule
ACWM &
Query-blind &
Action-conditioned, no query &
Broad actions \\
\midrule
Joint &
Query-aware &
Query-conditioned &
Query-weighted actions \\
\midrule
Modular &
Query-aware &
Action-conditioned, no query &
Broad actions \\
\bottomrule
\end{tabular}
\end{table}

ACWM predicts the future for an arbitrary supplied action sequence without receiving the query. Modular retains exactly this predictor but replaces the query-blind proposal with a query-aware one. Joint instead couples candidate generation and prediction through a query-conditioned pathway, so its future prediction also depends directly on the objective.

This distinction changes how predictions can be reused. ACWM and Modular can predict a physical outcome once and evaluate that same outcome under different objectives. Joint specializes its prediction pathway to the query used during planning.

\vspace{-1mm}
\subsection{Training and Action Exposure}
\label{app:oc-training}
\vspace{-1mm}

All interfaces are trained for 12,000 updates with matched scene and query streams. Proposal networks use cost-weighted maximum likelihood,
\begin{equation}
w_j \propto \exp[-J_q(A_j)/\tau],
\end{equation}
where the weights favor actions that perform well under the current query. The temperature is calibrated on training data and fixed before evaluation.

Physical prediction is trained from trajectory features and terminal visual features. ACWM is exposed to broad action samples independent of the query, and Modular reuses the same trained predictor. Joint instead receives query-weighted predictive examples associated with its query-conditioned proposal distribution. The comparison therefore evaluates the complete interface designs, including how query placement changes the actions encountered during predictive training.

Simulator costs are used to construct the offline proposal weights. Deployed candidate selection uses model predictions rather than simulator outcomes.

\vspace{-1mm}
\subsection{Regret Decomposition and Objective Shift}
\label{app:oc-regret}
\label{app:oc-normalization}
\vspace{-1mm}

To identify where each interface gains or loses performance, we decompose planning regret into candidate generation and final selection. For a proposed candidate set $C$, selected action $\widehat A$, and shared reference set $C_{\mathrm{ref}}$,
\begin{align}
\operatorname{Reg}_q(\widehat A;\theta,C_{\mathrm{ref}})
=
\underbrace{
\min_{A\in C}J_q(A;\theta)
-
\min_{A\in C_{\mathrm{ref}}}J_q(A;\theta)
}_{\text{candidate-set regret}} +
\underbrace{
J_q(\widehat A;\theta)
-
\min_{A\in C}J_q(A;\theta)
}_{\text{selection regret}}.
\end{align}

The first term measures whether the proposal generated good alternatives. The second measures whether the predictor selected a good action from those alternatives. We use the same simulator-evaluated reference set for all interfaces, pooling candidate populations from the proposal mechanisms together with additional goal-directed actions.

The decomposition clarifies how Modular improves over ACWM on held-out objectives. Modular reduces candidate-set regret by $0.024$, showing that query-aware proposal generation produces better alternatives. Its selection regret is $0.013$ higher, so the shared action-conditioned predictor does not always rank the improved candidate set more accurately. The improvement in candidate generation is larger, yielding a net regret reduction of $0.011$.

\begin{table}[h]
\centering
\small
\caption{Regret decomposition and objective-shift controls.}
\label{tab:oc-contrasts}
\vspace{1mm}
\begin{tabular}{lccc}
\toprule
\textbf{Comparison} & \textbf{Total regret} & \textbf{Candidate set} & \textbf{Selection} \\
\midrule
Modular $-$ ACWM, held-out
& $-0.011$ & $-0.024$ & $+0.013$ \\
\midrule
Modular $-$ Joint, seen
& $-0.014$ & -- & -- \\
\bottomrule
\end{tabular}

\vspace{1mm}

\begin{tabular}{lcc}
\toprule
& \multicolumn{2}{c}{\textbf{Joint--ACWM advantage reduction, seen $\rightarrow$ held-out}} \\
\cmidrule(lr){2-3}
& \textbf{Normalized cost} & \textbf{Raw cost} \\
\midrule
Difference
& $+0.023$ & $+0.033$ \\
\bottomrule
\end{tabular}
\end{table}

Joint improves over ACWM on seen objectives, but this advantage becomes smaller on held-out objective compositions. The same pattern remains when objective-specific normalization is removed and raw costs are used. The reduction therefore does not arise from the normalization used to combine objective families.

The seen and held-out groups contain different mixtures of constraints, so their absolute regret values should not be interpreted as a direct comparison of task difficulty. The relevant comparison here is how the relative advantage between interfaces changes when familiar objective terms are recombined.

\vspace{-1mm}
\subsection{Fixed-History Reuse and Higher-Capacity Diagnostic}
\vspace{-1mm}

Query placement also determines whether physical predictions can be reused when the objective changes. For ACWM and Modular, fixing the observation and action sequence fixes the predicted physical outcome. A cached prediction can therefore be re-scored under a new query without rerunning the physical predictor. Joint conditions its predictor directly on the query and does not expose the same query-independent prediction interface.

We additionally evaluate a \emph{Joint+action} diagnostic that augments Joint with a query-independent action-conditioned prediction route. This restores fixed-action reuse while retaining the query-conditioned pathway. Joint+action achieves lower regret than the three primary interfaces, but it also increases model capacity and broadens predictive training exposure. We therefore treat it as a diagnostic rather than a matched comparison of query placement.

\begin{table}[h]
\centering
\small
\caption{\textbf{Interface size and planning regret.} Regret is averaged within the seen and held-out objective groups.}
\label{tab:query_interfaces}
\vspace{1mm}
\resizebox{\linewidth}{!}{%
\begin{tabular}{cccccccccccc}
\toprule
\multicolumn{3}{c}{\textbf{ACWM}} &
\multicolumn{3}{c}{\textbf{Joint}} &
\multicolumn{3}{c}{\textbf{Modular}} &
\multicolumn{3}{c}{\textbf{Joint+action}} \\
\cmidrule(lr){1-3}\cmidrule(lr){4-6}\cmidrule(lr){7-9}\cmidrule(lr){10-12}
Params. (M) & Seen & Held-out &
Params. (M) & Seen & Held-out &
Params. (M) & Seen & Held-out &
Params. (M) & Seen & Held-out \\
\midrule
3.80 & 0.250 & 0.164 &
4.36 & 0.223 & 0.160 &
4.80 & 0.209 & 0.153 &
7.54 & 0.207 & 0.148 \\
\bottomrule
\end{tabular}%
}
\end{table}

The primary comparison therefore supports separating query-specific candidate generation from query-independent physical prediction. The query can guide where the planner searches, while the resulting physical predictions remain reusable across objectives. The Joint+action diagnostic shows that richer coupled architectures can improve further when given additional capacity and training exposure.

\vspace{-1mm}
\subsection{Qualitative Objective Compositions}
\vspace{-1mm}

Fig.~\ref{fig:query_visual_detail} illustrates how the three primary interfaces behave under different objective compositions from the same initial physical state. Each column after Start corresponds to a separately reset rollout. For visualization, the planner replans four times with 16 candidates per decision. The final column uses an objective composition held out during training.

The examples show how query placement changes the resulting behavior as constraints are added. They complement the aggregate results above by visualizing the trajectories produced under different objective combinations, rather than serving as an additional quantitative comparison.

\begin{figure}[h]
\centering
\includegraphics[width=\linewidth]{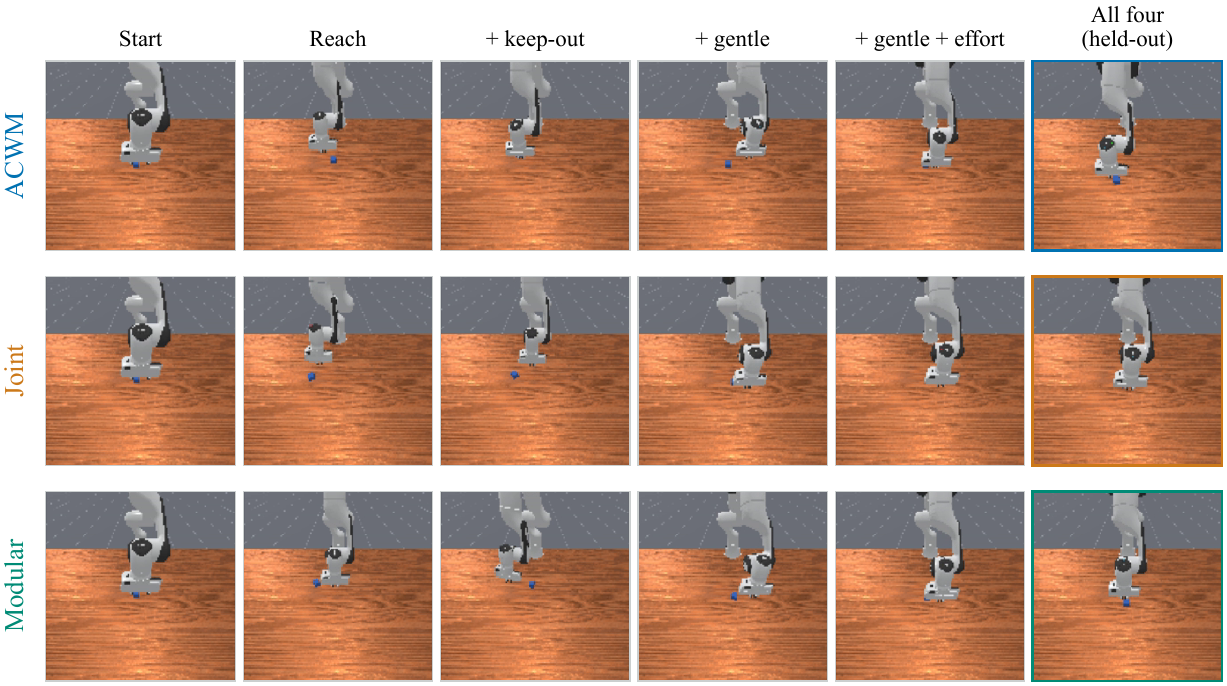}
\caption{\textbf{Three interfaces under different objective compositions.}
Each column after Start shows a separately reset rollout from the same initial physical state. The final composition was held out from training.}
\label{fig:query_visual_detail}
\end{figure}

\vspace{-2mm}
\section{Scope, Limitations, and Future Directions}
\label{app:limitations}
\vspace{-2mm}

\textbf{Scope and limitations.} This work studies what information a world model must preserve for planning, and how that requirement changes with the query, candidate set, and planner. Our formal analysis separates mechanism, response, and decision sufficiency, while the learned experiments examine how these distinctions arise in model-based planning. The empirical rates depend on the observation process, model class, optimization, coding scheme, and readout, so they characterize the studied systems rather than universal information-theoretic limits. Similarly, the continuous-action analysis isolates an idealized resolution effect that is approximated by finite candidate sets in the learned experiments. Our robotic studies use simulated Push-T and PushCube environments, which make it possible to vary candidate generation, prediction, and query information independently while keeping the underlying physics controlled. This controlled setting also defines the current scope of the empirical conclusions. Adaptive-search results are demonstrated primarily with CEM, and other planners may require different intermediate distinctions. The query-placement study changes how trajectories are evaluated while keeping the dynamics fixed, allowing prediction reuse across objectives to be measured directly. Settings in which objectives also change the available actions, observations, or interaction dynamics may introduce additional dependencies between the query and physical prediction.

\textbf{Future directions.} The results suggest several practical directions for world-model design. When objectives change frequently, separating query-guided candidate generation from action-conditioned physical prediction can allow the same predicted outcomes to be reused across tasks while directing search toward actions relevant to the current objective. This motivates planning systems that cache and reuse physical predictions, update candidate proposals as objectives change, and allocate predictive detail according to the current stage of search. A further direction is to move beyond fully query-independent or fully query-conditioned prediction by allowing only selected parts of the predictive representation to depend on the objective. Such interfaces could retain reusable physical structure while specializing the information that directly affects planning decisions. Extending these principles to longer-horizon manipulation, richer embodied tasks, and large learned world models will help determine how query-dependent planning requirements interact with model error, partial observability, and changing objectives in realistic deployment settings.

\end{document}